\documentclass[letterpaper]{article} 
\usepackage{aaai24}  
\usepackage{times}  
\usepackage{helvet}  
\usepackage{courier}  
\usepackage[hyphens]{url}  
\usepackage{graphicx} 
\urlstyle{rm} 
\usepackage{natbib}  
\usepackage{caption} 
\frenchspacing  
\setlength{\pdfpagewidth}{8.5in}  
\setlength{\pdfpageheight}{11in}  
%
\usepackage{algorithm}
\usepackage{algorithmic}


\usepackage{amsmath}

\usepackage{subcaption}
\usepackage{threeparttable}
\usepackage{diagbox}

\usepackage{epsfig,epsf,fancybox}
\usepackage{mathrsfs}
\usepackage{amssymb}
\usepackage{color}
\usepackage{multirow}
\usepackage{paralist}
\usepackage{verbatim}
\usepackage{galois}
\usepackage{boxedminipage}
\usepackage{accents}
\usepackage{stmaryrd}
\usepackage[bottom]{footmisc}
\definecolor{light-gray}{gray}{0.85}
\usepackage{colortbl}
\usepackage{makecell}
\usepackage{hhline}
\usepackage{mathtools}
\usepackage{xcolor}         
\usepackage{bbold}
\usepackage[utf8]{inputenc} 
\usepackage{url}            
\usepackage{booktabs}       
\usepackage{amsfonts}       
\usepackage{nicefrac}       
\usepackage{microtype}      
\usepackage{MnSymbol}


\newtheorem{theorem}{Theorem}[section]

\newcommand{\Lamb}{{\boldsymbol \lambda}}

\newcommand{\g}{\mathbf g}

\newcommand{\mc}{\mathcal}
\newcommand{\mbb}{\mathbb}
\newcommand{\mb}{\mathbf}

\newcommand{\ba}{\begin{array}}
\newcommand{\ea}{\end{array}}


%
\usepackage{newfloat}
\usepackage{listings}
\DeclareCaptionStyle{ruled}{labelfont=normalfont,labelsep=colon,strut=off} 
\lstset{%
	basicstyle={\footnotesize\ttfamily},
	numbers=left,numberstyle=\footnotesize,xleftmargin=2em,
	aboveskip=0pt,belowskip=0pt,%
	showstringspaces=false,tabsize=2,breaklines=true}
\floatstyle{ruled}
\newfloat{listing}{tb}{lst}{}
\floatname{listing}{Listing}
%
\pdfinfo{
/TemplateVersion (2024.1)
}

\setcounter{secnumdepth}{2} 

%

\title{Balance Reward and Safety Optimization for Safe Reinforcement Learning: A Perspective of Gradient Manipulation}
\author {
    Shangding Gu\textsuperscript{\rm 1},
    Bilgehan Sel\textsuperscript{\rm 2},
    Yuhao Ding\textsuperscript{\rm 3},
    Lu Wang\textsuperscript{\rm 4},
    Qingwei Lin\textsuperscript{\rm 4},
    Ming Jin\textsuperscript{\rm 2}\thanks{Equally advise. Corresponding authors: Ming Jin (\textit{jinming@vt.edu}) and Shangding Gu (\textit{shangding.gu@tum.de}). \\ Copyright © 2024, Association for the Advancement of Artificial
Intelligence (www.aaai.org). All rights reserved.},
    Alois Knoll\textsuperscript{\rm 1}$^{*}$
}
\affiliations {
    \textsuperscript{\rm 1}Technical University of Munich\\
    \textsuperscript{\rm 2} Virginia Tech\\
    \textsuperscript{\rm 3}University of California, Berkeley\\
    \textsuperscript{\rm 4}Microsoft Research\\
}
\usepackage{bibentry}

\begin{document}

\maketitle

\begin{abstract}
Ensuring the safety of Reinforcement Learning (RL) is crucial for its deployment in real-world applications. Nevertheless, managing the trade-off between reward and safety during exploration presents a significant challenge. Improving reward performance through policy adjustments may adversely affect safety performance. In this study, we aim to address this conflicting relation by leveraging the theory of gradient manipulation. Initially, we analyze the conflict between reward and safety gradients. Subsequently, we tackle the balance between reward and safety optimization by proposing a soft switching policy optimization method, for which we provide convergence analysis. Based on our theoretical examination, we provide a safe RL framework to overcome the aforementioned challenge, and we develop a Safety-MuJoCo Benchmark to assess the performance of safe RL algorithms. Finally, we evaluate the effectiveness of our method on the Safety-MuJoCo Benchmark and a popular safe RL benchmark, Omnisafe. Experimental results demonstrate that our algorithms outperform several state-of-the-art baselines in terms of balancing reward and safety optimization.
\end{abstract}

\section{Introduction}
\label{section:introduction}

Reinforcement Learning (RL) has demonstrated remarkable performance in various scenarios \cite{gu2022review}, such as the game of Go \cite{silver2016mastering}, autonomous driving \cite{kiran2021deep, gu2022constrained}, and robotics \cite{kober2013reinforcement, gu2023safe}. However, the majority of RL methods are restricted to simulation environments due to safety concerns associated with deploying RL in real-world settings. To address this issue, numerous safe RL methods have been proposed to tackle the safety challenge.

For instance, Constrained Policy Optimization (CPO)~\cite{achiam2017constrained} is developed to ensure reward monotonic improvement while maintaining safety. PPO Lagrangian and TRPO Lagrangian methods \cite{ray2019benchmarking} are introduced to address the balance between reward and safety performance by employing Lagrangian optimization. Additionally, safe exploration methods based on a Gaussian Process (GP)~\cite{sui2015safe} are developed to guarantee exploration safety by utilizing a GP to model the exploration safety of the state space. However, these methods may not effectively resolve the conflict reward and cost gradients, and balance reward and safety optimization.
A key question that is raised in this domain is: How can we handle the balance between reward and safety optimization? 

In this research, we aim to address the key question by leveraging the theory of gradient manipulation \cite{yu2020gradient, chen2021generalized, liu2021conflict, zhou2022convergence}, wherein we conduct a detailed examination of the changes in gradients associated with reward and safety. Based on our theoretical analysis, we propose the projection constraint-rectified policy optimization (PCRPO) method, designed to alleviate the conflict between reward and safety optimization while maintaining a balance between their optimization levels, in which soft switching policy optimization through gradient manipulation is proposed and a slack technique is introduced for adjusting the emphasis on safety optimization. Particularly, we evaluate the effectiveness of our method on a multitude of challenging tasks, and conduct ablation experiments to thoroughly examine the performance of our method. The empirical findings suggest that our approach outperforms strong baselines concerning the balance between reward maximization and safety preservation.

The present study offers several significant contributions to the field, which are enumerated as follows: (1)
The introduction of a novel problem concerning safe RL involving conflicts between reward and cost gradients. (2)
The development of a safe RL framework employing soft switching via gradient manipulation. (3)
The establishment of a new benchmark for Safe RL evaluation, designed to assess the performance of safe RL algorithms; (4)
The demonstration that the practical algorithms proposed in this study surpass existing state-of-the-art baselines with respect to both reward and safety performance.

\section{Related Work}
\label{section:related_work}

In recent years, numerous safe RL methods have been proposed to ensure RL safety~\cite{gu2022review, gu2023human}. These safe RL methods can be briefly categorized into three main groups: (1) Control theory-based safe RL: These methods leverage principles from control theory, such as model predictive control and Lyapunov functions, to ensure that the agent operates within safety constraints while learning optimal behavior~\cite{koller2018learning}. (2) Formal methods-based safe RL: These approaches employ formal verification and synthesis techniques, such as temporal logic, to guarantee that the learned policies satisfy safety specifications~\cite{fulton2018safe}. (3) Constrained optimization-based safe RL: These methods focus on optimizing the agent's behavior while adhering to safety constraints. Techniques like constrained policy optimization and Lagrangian relaxation are used to ensure that the RL algorithm respects the safety limits during learning~\cite{brunke2022safe}.

Specifically, from the control theory perspective, Lyapunov functions are employed to ensure learning safety by constraining the action space of exploration~\cite{chow2018lyapunov, chow2019lyapunov}. Although Lyapunov function-based methods can demonstrate good performance in ensuring learning safety, defining specific Lyapunov functions requires a system model, and it is usually challenging to find a function that can handle general safe RL problems. From the formal methods perspective, some methods based on formal techniques are proposed to guarantee RL safety. For example, temporal logic verification is used to verify safe actions during exploration~\cite{li2019temporal}. Such methods can rigorously ensure learning safety. However, external knowledge is needed to define the safe state and action space, which may be difficult to deploy in real-world RL applications.

Compared to the aforementioned methods, constrained optimization-based safe RL methods have gained considerable attention due to their relative maturity and broad applicability. One branch of constrained optimization-based safe RL methods encompasses primal-dual methods~\cite{boyd2004convex}. A notable method within this branch is CPO, which employs TRPO~\cite{schulman2015trust} in constrained optimization and can nearly guarantee hard constraints via a line search method~\cite{nocedal1998combining}. PPO-Lagrangian, another representative primal-dual optimization method~\cite{zhou2023gradient, calian2020balancing}, is developed based on Lagrangian optimization and dynamically adjusts the Lagrangian multiplier in response to safety violations. Following CPO and PPO-Lagrangian, recent state-of-the-art baselines, such as PCPO~\cite{yang2020projection} and CUP~\cite{yang2022constrained},  are proposed to ensure learning safety. 
Another branch of constrained optimization-based safe RL methods consists of primal methods~\cite{boyd2004convex}. CRPO~\cite{xu2021crpo}, a representative method for primal optimization, directly enhances reward performance while ensuring learning safety within the primal problem. In contrast to primal-dual-based methods, primal-based methods offer ease of implementation and are not burdened by hyperparameter tuning issues related to dual variables. Moreover, primal-based safe RL methods do not necessitate feasible initialization. However, poor initialization can adversely affect the performance of primal-dual optimization-based methods~\cite{xu2021crpo}.

The methods mentioned above do not explicitly analyze and address the gradient conflicts between reward and cost optimization. This oversight can lead to significantly negative effects on safe RL performance. In contrast to previous works, our proposed method, which is based on primal optimization, necessitates only gradients from the objective and the costs to ensure safe exploration. This is a key difference from other methods like CRPO, where gradient conflicts may lead to unsafe exploration and wasted samples during training. By focusing on resolving these gradient conflicts, our approach aims to provide a more effective solution for safe RL applications.

\section{Problem Formulation}
\label{section:problem_formulation}

\paragraph{Markov Decision Processes}
An infinite-horizon Markov Decision Process  MDP$(\mc{S},\mc{A},P,r,\gamma)$  is specified by: a state space $\mc{S}$; an action space $\mc{A}$; a transition dynamics $P: \mc{S}\times \mc{A}\times\mc{S} \rightarrow [0,1]$, where $P(s^\prime \vert s,a)$ is the probability of transition from state $s$ to state $s^\prime$ when action $a$ is taken; a reward function $r: \mc{S}\times \mc{A} \rightarrow \mathbb{R}$, where $r(s,a)$ is the instantaneous reward when taking action $a$ in state $s$; a discount factor $\gamma\in[0,1)$. 
A policy $\pi: \mc{S}\rightarrow \Delta (\mc{A})$ represents that the decision rule the agent uses, i.e. the agent takes action $a$ with probability $\pi(a\vert s)$ in state $s$. 
Given a policy $\pi$, the value function $V^\pi : \mc{S} \rightarrow \mathbb{R}$ is defined to characterize the discounted sum of the rewards earned under $\pi$, i.e.
\begin{equation}
V_r^{\pi}(s):=\mbb{E}\left[\sum_{t=0}^{\infty} \gamma^{t} r\left(s_{t}, a_{t}\right) \bigg| \pi, s_{0}=s\right],\ \forall s\in\mc{S}
\end{equation}
where the expectation is taken over all possible trajectories, in which $a_t \sim \pi(\cdot \vert s_t)$ and $s_{t+1} \sim P(\cdot \vert s_t, a_t)$.
When the initial state is sampled from some distribution $\rho$, we slightly abuse the notation and define the value function as
\begin{equation}
V_r^{\pi}(\rho):=\mbb{E}_{s\sim \rho}\left[V^{\pi}(s)\right].
\end{equation}

The action-value function (or Q-function) $Q_r^\pi:\mc{S}\times \mc{A}\rightarrow \mbb{R}$ under policy $\pi$ is defined as
\begin{equation}
Q_r^{\pi}(s, a)=\mathbb{E}\left[\sum_{t=0}^{\infty} \gamma^{t} r\left(s_{t}, a_{t}\right) \bigg| \pi, s_{0}=s, a_{0}=a\right],
\end{equation}
which can be interpreted as the expected total reward with an initial state $s_0 = s$ and an initial action $a_0 = a$.

\paragraph{Constrained MDP}
In a Constrained Markov Decision Process CMDP$(\mc{S},\mc{A},P,r,\mb{c},\mb{b},\gamma)$, besides the reward function $r$, we have a cost function $\mb{c} = (c_1,\dots,c_n): \mc{S}\times \mc{A} \rightarrow \mathbb{R}^n $ and a threshold $\mb{b}=(b_1,\dots,b_n)\in \mathbb{R}^n$.
In the safety-critical environments, the agent aims at maximizing the expected (discounted) cumulative reward for a given initial distribution $\rho$ while satisfying constraints on the expected (discounted) cumulative cost, i.e.,
\begin{equation}\label{eq:cmdp}
\begin{aligned}
\max_{\pi\in \Pi}\  V^\pi_r(\rho), \ \text{ s.t. } V^\pi_{c_i}(\rho) \leq b_i, \ \forall i =1,\ldots,n.
\end{aligned}
\end{equation}
where the expectation is taken over all possible trajectories, and $ V^\pi_r(\rho)$ and $ V^\pi_{c_i}(\rho)$ denote the value function corresponding to the reward and cost functions, respectively.

\paragraph{Primal vs Primal-dual Approaches}
The current safe RL methods can be generally categorized into the \textbf{primal} and \textbf{primal-dual} approaches. In \textbf{primal-dual} optimization, the
primal-dual approaches convert the constrained problem  \eqref{eq:cmdp} into an unconstrained one by augmenting the objective with a sum of constraints
weighted by their corresponding dual variables $\Lamb$. The associated Lagrangian function $L(\pi,\Lamb)$ is defined as:
\begin{align}
L(\pi,\Lamb) &:= V_{r}^\pi(\rho) - \Lamb^T\left(V_\mb{c}^\pi (\rho)-\mb{b}\right), \label{def:lagrangian}
\end{align}
where $\Lamb \in \mathbb{R}_+^{n}$. The safe policy is learned from  
applying a certain policy optimization update such as (natural) policy
gradient alternatively with a gradient descent type update
for the dual variables: $\pi_{t+1} = \pi_t +\eta_1 \widetilde{\nabla_\pi L} (\pi_t,\Lamb_t),
\Lamb_{t+1} = \mc{P}_U\left(\Lamb_t - \eta_2 \left(V_\mb{c}^\pi (\rho)-\mb{b} \right)    \right),\quad \text{for } t=0,1,2,\dots,$ where $\eta_1 >0$, $\eta_2 >0$ are step-sizes, $\widetilde{\nabla_\pi L}$ can be the policy gradient or its variants,  and the dual feasible region $U:= [0,C_0]$ is an interval that contains $\Lamb^\star$.

In \textbf{primal} optimization, the necessity for dual variables is eliminated, enabling the immediate optimization of rewards and costs. Such approaches, exemplified by CRPO~\cite{xu2021crpo}, have demonstrated superior outcomes compared to conventional primal-dual techniques with guaranteed convergence. Nevertheless, when transitioning between reward and cost optimization, conflicting relationships may arise between reward gradients $\boldsymbol{g_{r}}$ and cost gradients $\boldsymbol{g_{c_i}}$. This conflict $f_{rc_i}$ has the potential to negatively impact the efficacy of primal methods in terms of both reward and safety performance.

As depicted in Figure~\ref{fig:primal-crpo-problem}, the reward gradient is represented by $\g_r$, while the cost gradient is denoted by $\g_c$. Additionally, $\g_c^-$ signifies the projection of the cost gradient $\g_c$ onto the plane of the reward gradient $\g_r$, and $\g_r^-$ refers to the projection of the reward gradient $\g_r$ onto the plane of the cost gradient $\g_c$. During the primal optimization, a transition from cost optimization to reward optimization occurs. In this scenario, the cost gradient optimization process adversely impacts the reward optimization process ($p_2$). Consequently, the current gradient is expressed as $\g = \g_r - \g_c^-$. Conversely, when switching from reward optimization to cost optimization ($p_1$), the current gradient is given by $\g = \g_c - \g_r^-$.

  \begin{figure}[tb!]
 \centering
 {
\includegraphics[width=0.3\linewidth, angle=-90]{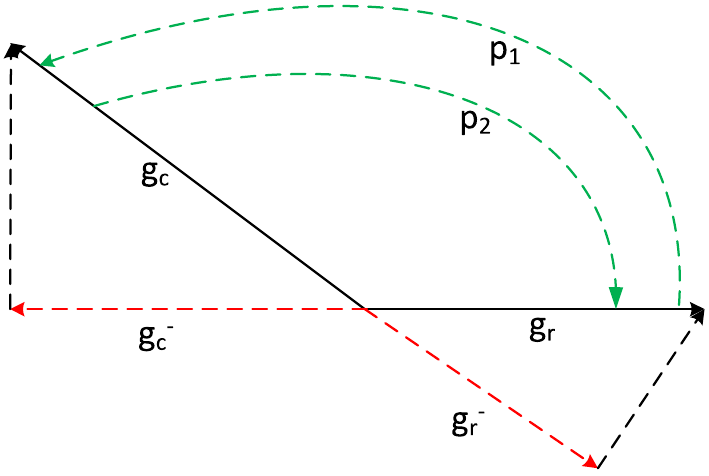}   
}
 	\caption{\normalsize Conflicts between reward and cost optimization.
 	} 
  \label{fig:primal-crpo-problem}
 \end{figure} 

Consequently, it is crucial to ascertain a method for balancing the relation between rewards and costs while simultaneously mitigating the adverse effects of conflicting gradients and minimizing the optimization oscillation on the overall performance. To satisfy the above requirements, we optimize reward and safety performance while minimizing the deviation between reward and cost gradients. This helps prevent conflicting gradients and oscillations during the optimization process.


\section{Method}
\label{section:method}

In order to tackle the safe RL problem, as illustrated in Equation~(\ref{eq:cmdp}), it is imperative to first address the conflict between reward and cost gradients. To this end, we introduce a novel soft switching optimization solution that employs gradient manipulation to achieve a balanced relationship between these gradients. This approach incorporates a slack mechanism designed to smoothly optimize both reward and cost. Subsequently, we analyze the gradient change by soft switching. Then, the convergence analysis is provided. Lastly, we present a safe RL framework through gradient manipulation based on primal optimization, and a practical algorithm that effectively facilitates the implementation of our proposed method in real-world scenarios.

\subsection{Soft Switching through Gradient Manipulation}
By leveraging the gradient manipulation,  we effectively regulate the switching transitions and minimize the oscillations between reward and cost optimization. The objective of soft switching in this context is to enhance the overall efficiency, performance, and reliability of the algorithms, while simultaneously reducing the deviation of gradients between reward and cost components.
As illustrated in Figure~\ref{fig:finish-pcrpo-projection-case-one-two}, the cost gradient of constraint $i$ is denoted by $\g_{c_i}$. For the sake of simplicity, we represent it as $\g_c$. The projection gradient of $\g_c$ on the normal plane of gradient $\g_r$ is given by $\g_{c_i}^+$, while the projection gradient of $\g_r$ on the normal plane of gradient $\g_{c_i}$ is represented by $\g_{r}^+$. The angle between gradients $g_r$ and $g_c$ is denoted by $\theta$.


During the gradient manipulation process, the gradient projection is employed if the angle $\theta$ exceeds $90^{\circ}$, as demonstrated in Equation (\ref{eq:two-projection-gradient}) (A simplified illustration of this scenario can be observed in Figure~\ref{fig:finish-pcrpo-projection-case-one-two} (a)), where $\beta^+_r$ and $\beta^+_c$ denote the weights of gradient $\g_r^+$ and gradient $\g_c^+$, respectively. Conversely, when the angle $\theta$ is less than or equal to $90^{\circ}$, Equation~(\ref{eq:two-projection-gradient-average}) is leveraged to handle gradient manipulation (A simplified illustration of this scenario can be observed in Figure~\ref{fig:finish-pcrpo-projection-case-one-two} (b)),  where $\beta_r$ and $\beta_c$ denote the weights of gradient $\g_r$ and gradient $\g_c$, respectively. Our approach aims to minimize optimization oscillations by reducing the deviation between reward and cost gradients, $f_{rc} = f(\boldsymbol{\g_{r}}, \boldsymbol{\g_{c}}) = \theta$, particularly for conflicting gradients between reward and cost optimization.  This finally allows us to identify a gradient that can effectively satisfy safety constraints while simultaneously enhancing reward performance. In next section, we will analyze how gradient change with soft switching.



\begin{align}
\label{eq:initial-gradient-projection}
    \g_r^+ = \g_r - \frac{\g_r \cdot \g_{c}}{|\g_{c}|^2} \g_{c}, \ \ \g_c^+ = \g_{c} - \frac{\g_c \cdot \g_{r}}{|\g_{r}|^2} \g_{r},
\end{align}
\begin{align}
\label{eq:two-projection-gradient}
    \g = \beta^+_r \g^+_{r} + \beta^+_c \g^+_{c}, 
\end{align}
\begin{align}
\label{eq:two-projection-gradient-average}
    \g = \beta_r \g_{r} + \beta_c \g_{c}. 
\end{align}

  \begin{figure}[tb!]
 \centering
 \subcaptionbox{}
 {
\includegraphics[width=0.5\linewidth, angle=0]{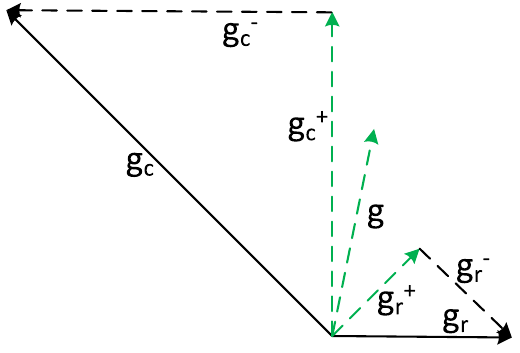}   
}
\subcaptionbox{}
 {
\includegraphics[width=0.28\linewidth, angle=0]{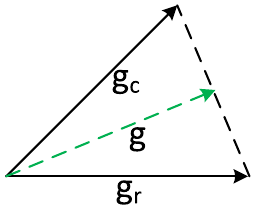}   
}
 	\caption{\normalsize Soft switching through gradient manipulation.
 	} 
  \label{fig:finish-pcrpo-projection-case-one-two}
 \end{figure}

\subsection{Gradient Analysis with Soft Switching}
\label{subsec:gradient-analysis-with-soft-switching}


In this analysis, for the purpose of simplification, we consider $\beta_r^+$, $\beta_c^+$, $\beta_r$ and $\beta_c$ to be equal to 0.5. Other cases follow a similar pattern and can be easily proven based on our analytical framework. In instances where $\theta \geq 90^\circ$, we have two strategies to handle the conflict gradients between reward and safety optimization. The first strategy is to leverage Equation~(\ref{eq:two-projection-gradient-average}) to address the policy gradient, as demonstrated in Figure~\ref{fig:finish-pcrpo-projection-case-one-two-analysis} (a), results in the gradient being represented by $\g^-$, which is indicated by the red arrow. The second strategy is to employ Equation~(\ref{eq:two-projection-gradient}) that allows the gradient to be depicted as $\g$, denoted by the green dashed line in Figure~\ref{fig:finish-pcrpo-projection-case-one-two-analysis} (a). To assess which gradient manipulation is better, we provide the following analysis for this instance. Specifically, by capitalizing on geometric properties, it can be observed that, when $\theta \geq 90^\circ$, the projection of gradient $\g_r$ on the normal plane of gradient $\g_c$ is $\g_r^+$, and the projection of gradient $\g_c$ on the normal plane of gradient $\g_r$ is $\g_c^+$. Consequently, under such conditions, the following property is maintained:


\begin{equation}
  \begin{aligned}
    \g^- = \frac{{\g_{r}} + {\g_{c}}}{2}
    = \frac{\frac{\g_r}{\|\g_r\|}\left(\|\g_{r}\|\right) + \frac{\g_c}{\|\g_c\|}\left(\|\g_{c}\|\right)}{2},
\end{aligned}  
\end{equation}

\begin{equation}
  \begin{aligned}
  \label{eq:cos-vector-gradient}
    \cos \left(\theta\right)=\frac{\mathbf{g}_{\mathbf{r}} \cdot \mathbf{g}_{\mathbf{c}}}{\left\|\mathbf{g}_{\mathbf{r}}\right\|\left\|\mathbf{g}_{\mathbf{c}}\right\|},
\end{aligned}  
\end{equation}
with Equation~(\ref{eq:initial-gradient-projection}) and Equation (\ref{eq:cos-vector-gradient}), we can observe, 
\begin{equation}
  \begin{aligned}  
    &\resizebox{!}{0.7cm}{$\g = \frac{{\g_{r}^+} + {\g_{c}^+}}{2}
    =\frac{\left(\g_r - \frac{\g_r \cdot \g_{c}}{\|\g_{c}\|^2} \g_{c}\right) + \left(\g_{c} - \frac{\g_c \cdot \g_{r}}{\|\g_{r}\|^2} \g_{r}\right)}{2}$}
    \\
    = &\resizebox{!}{0.67cm}{$\frac{\left(\frac{\g_r}{\|\g_r\|} \|\g_r\| - \frac{cos(\theta)\|\g_r\| \|\g_{c}\|}{\|\g_{c}\|^2} \g_{c}\right) + \left(\frac{\g_c}{\|\g_c\|}\|\g_{c}\| - \frac{cos(\theta)\|\g_c\| \|\g_{r}\|}{\|\g_{r}\|^2} \g_{r}\right)}{2}$}.
\end{aligned}  
\end{equation}
Under the condition of $\theta \geq 90^\circ, cos(\theta) \leq 0$, we can observe,
\begin{align}
&\left( - \frac{cos(\theta)\|\g_r\| \|\g_{c}\|}{\|\g_{c}\|^2} \right) + \left( - \frac{cos(\theta)\|\g_c\| \|\g_{r}\|}{\|\g_{r}\|^2} \right) \geq 0 \nonumber\\
&\Longrightarrow
    \|\g\| \geq \|\g^-\|.
\end{align}


Hence, when $\theta \geq 90^\circ$, the strategy of Equation (\ref{eq:two-projection-gradient}) proves to be more effective than the strategy of Equation (\ref{eq:two-projection-gradient-average}) in handling gradient deviations. This indicates that the second strategy can successfully mitigate gradient degradation while addressing conflicting gradients. Specifically, the gradient $\g$ can be considered an {equilibrium gradient} that strikes a suitable balance between reward optimization and safety constraints. This implies that an increase in the reward or cost expected gradient cannot be achieved by altering its gradient manipulation, given that other gradients remain unmodified.


 \begin{figure}[tb!]
 \centering
 \subcaptionbox{}
 {
\includegraphics[width=0.65\linewidth, angle=0]{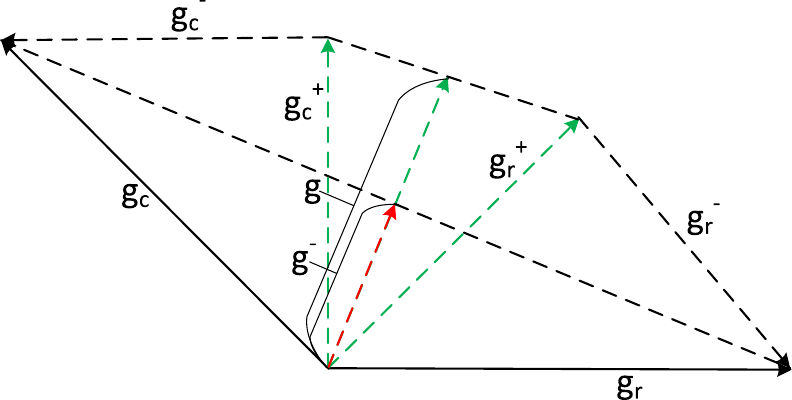}   
}
\subcaptionbox{}
 {
\includegraphics[width=0.23\linewidth, angle=0]{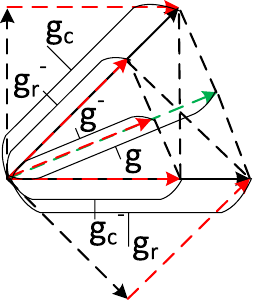}   
}
\subcaptionbox{}
 {
\includegraphics[width=0.65\linewidth, angle=0]{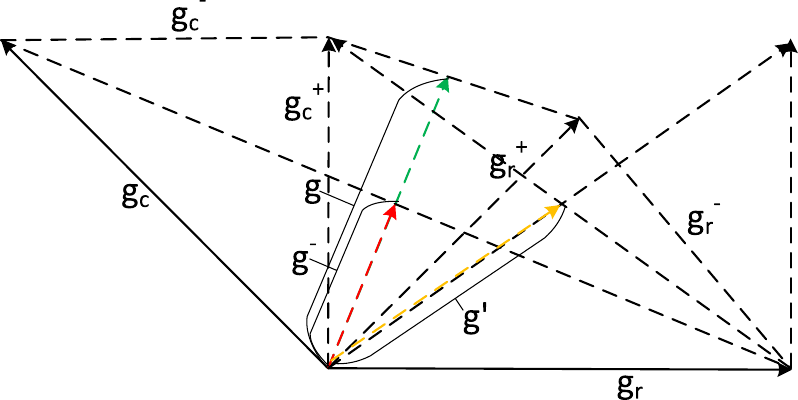}   
}
 	\caption{\normalsize Analysis of Soft switching through gradient manipulation.
 	} 
  \label{fig:finish-pcrpo-projection-case-one-two-analysis}
 \end{figure} 

 Under the circumstance where $\theta < 90^\circ$ and $cos(\theta) > 0$, as illustrated in Figure~\ref{fig:finish-pcrpo-projection-case-one-two-analysis} (b), the projection gradients of $\g_r$ on the normal plane of $\g_c$ and $\g_c$ on the normal plane of $\g_r$ yield gradients $\g_r^-$ and $\g_c^-$. Upon observation, it can be noted that,  
 \begin{equation}
  \begin{aligned}
  \label{eq:theta-less-than-90-projection}
    &\g^- = \frac{{\g_{r}^+} + {\g_{c}^+}}{2}    
    \\
    = &\resizebox{!}{0.68cm}{$\frac{\left(\frac{\g_r}{\|\g_r\|} \|\g_r\| - \frac{cos(\theta)\|\g_r\| \|\g_{c}\|}{\|\g_{c}\|^2} \g_{c}\right) + \left(\frac{\g_c}{\|\g_c\|}\|\g_{c}\| - \frac{cos(\theta)\|\g_c\| \|\g_{r}\|}{\|\g_{r}\|^2} \g_{r}\right)}{2}$},
\end{aligned}  
\end{equation}
with $\theta < 90^\circ, cos(\theta)  > 0$, the following property holds, 
 \begin{equation}
  \begin{aligned}
    -\left( \frac{cos(\theta)\|\g_r\| \|\g_{c}\|}{\|\g_{c}\|^2} \right) - \left( \frac{cos(\theta)\|\g_c\| \|\g_{r}\|}{\|\g_{r}\|^2} \right) < 0,
\end{aligned}  
\end{equation}
\begin{equation}
  \begin{aligned}
  \label{eq:theta-less-than-90-average}
    \g = \frac{{\g_{r}} + {\g_{c}}}{2}
    = \frac{\frac{\g_r}{\|\g_r\|}\|\g_{r}\| + \frac{\g_c}{\|\g_c\|}\|\g_{c}\| }{2}.
\end{aligned}  
\end{equation}
Thus, we can observe ${\|\g\|} > {\|\g^-\|}$.
In this example, $\theta < 90^\circ$, the gradient managed by the second strategy, as illustrated in Equation (\ref{eq:two-projection-gradient}), can be observed in Equation(\ref{eq:theta-less-than-90-projection}). Concurrently, the gradient addressed using the first strategy, as depicted in Equation(\ref{eq:two-projection-gradient-average}), is presented in Equation (\ref{eq:theta-less-than-90-average}). Upon examination, it becomes evident that under these conditions, the first strategy surpasses the second strategy in effectively handling deviations in reward and cost gradients. Furthermore, the first strategy is capable of mitigating gradient degradation while simultaneously reducing gradient deviation.

The gradient projection manipulation used in this study is inspired by gradient manipulation \cite{yu2020gradient, chen2021generalized, liu2021conflict, zhou2022convergence}, and we further leverage it for reward and safety balance. Specifically, as illustrated in Figure~\ref{fig:finish-pcrpo-projection-case-one-two-analysis} (c), the yellow dashed line denotes the updated gradient, $\g'$, generated by Algorithm 1 of gradient surgery \cite{yu2020gradient}, the angle, $\theta^s$, between $\g'$ and the original cost gradient, $\g_c$, remains greater than $90^\circ$. This observation implies that the relation between $\g'$ and $\g_c$ continues to exhibit a conflicting nature, potentially resulting in inadequate handling of the original cost gradient. Based on the subsequent analysis as shown in Equation~(\ref{eq:original-projection-gradient}), our gradient manipulation approach demonstrates improved performance. Algorithm 1 of gradient surgery~\cite{yu2020gradient} does not consider cases where $\theta < 90^\circ$, which might be insufficient for addressing optimization oscillations effectively. Thus, on the basis of gradient manipulation \cite{yu2020gradient, chen2021generalized, liu2021conflict, zhou2022convergence}, we consider both conflicting and non-conflicting gradient scenarios in safe RL. In the upcoming experiment section, we also provide ablation experiments to investigate the effectiveness of gradient manipulation methods regarding reward and safety optimization.
\vspace{-3pt}
\begin{equation}
\begin{aligned}
\label{eq:original-projection-gradient}
    \g' &= \frac{{\g_{r}} + {\g_{c}^+}}{2}
    =\frac{\g_r + \left(\g_{c} - \frac{\g_c \cdot \g_{r}}{\|\g_{r}\|^2} \g_{r}\right)}{2},
    \\
    &= \frac{\frac{\g_r}{\|\g_r\|} \|\g_r\|+ \left(\frac{\g_c}{\|\g_c\|}\|\g_{c}\| - \frac{cos(\theta)\|\g_c\| \|\g_{r}\|}{\|\g_{r}\|^2} \g_{r}\right)}{2},\\ &
    \Longrightarrow \|\g\| > \|\g'\|. 
\end{aligned}   
\end{equation}

\subsection{A Framework for Safe Reinforcement Learning with Soft Switching}

In this section, we present a comprehensive framework referred to as PCRPO, which iteratively optimizes performance until convergence is achieved. As demonstrated in Algorithm 1 in Appendix~A, we propose a novel approach with slack techniques to address the deviation of reward and cost gradients, particularly for conflicting gradients. 

\textbf{Case one:} In the event that the slack value tends toward infinity, i.e., $h^+ \rightarrow +\infty$ and $h^- = 0$, the optimization process is adapted based on the satisfaction of safety constraints. When a safety violation occurs, the optimization exclusively focuses on safety by employing Equation (\ref{eq:update-policy-parameters-for-safety}), where $w$ is the parameters represented by neural networks, $\eta$ is the step size of gradient update. Conversely, if safety constraints are satisfied, the optimization process incorporates the projection gradient, as delineated in Equation (\ref{eq:projection-gradient-one}). \textbf{Case two:}
In the event that the slack value is denoted by $h^+ = 0$ and $h^- \rightarrow -\infty$, a safety violation necessitates the enhancement of the reward and concurrent reduction of cost by employing Equation (\ref{eq:projection-gradient-one}). Conversely, when safety requirements are fulfilled, the focus shifts solely to the optimization of reward performance, as demonstrated by Equation (\ref{eq:update-policy-parameters-for-reward}).

\textbf{Case three:}
 In situations where the slack value is confined to the range of $+\infty > h^+ > 0$ and $0 > h^- > -\infty$,  several circumstances can be observed. If upper slack, lower slack, and safety violations occur simultaneously, the optimization process is devoted solely to addressing safety concerns, as indicated by Equation (\ref{eq:update-policy-parameters-for-safety}). In the absence of upper slack violations, while lower slack and safety violations transpire concurrently, the strategy involves enhancing the reward and concurrently reducing the cost by employing Equation (\ref{eq:projection-gradient-one}). Conversely, when upper slack and safety violations are not present, but a lower slack violation persists, the same approach of augmenting the reward and minimizing the cost is implemented using Equation (\ref{eq:projection-gradient-one}). Finally, in the absence of violations related to upper slack, lower slack, and safety, the primary focus is directed towards optimizing reward performance, as demonstrated by Equation (\ref{eq:update-policy-parameters-for-reward}).
 \begin{align} 
\label{eq:update-policy-parameters-for-reward}
w_{t+1}=w_t+\eta \bar{\Delta}_t^r, \g^r = \bar{\Delta}_t^{r},
\end{align}
\begin{align} 
\label{eq:update-policy-parameters-for-safety}
w_{t+1}=w_t-\eta \bar{\Delta}_t^{c_i}, \g^c =  -\bar{\Delta}_t^{c_i},
\end{align}
where from Lemma 5.1 of \cite{agarwal2021theory}, we have 
\begin{equation}
\bar{\Delta}_t^{r} = (1-\gamma)^{-1} \bar{Q}_t^{r_{t}}(s, a), \bar{\Delta}_t^{c_i} = (1-\gamma)^{-1} \bar{Q}_t^{c_{i, t}}(s, a).
\end{equation}
\begin{equation}
\label{eq:projection-gradient-one}
w_{t+1}=w_t + \eta \cdot\left(\frac{\g_r^++\g_c^+}{2}\right), w_{t+1}=w_t + \eta \cdot\left(\frac{\g_r+\g_c}{2}\right).
\end{equation}

Inspired by CRPO~\cite{xu2021crpo}, we implement our algorithm within the context of the primal optimization setting. Similarly, we initially evaluate the policy and subsequently improve it while addressing safety constraints.
 
\subsubsection{Policy Evaluation}
During the policy evaluation step, the objective is to learn Q-functions that accurately evaluate the previous policy $\pi_t$. To accomplish this, we train distinct Q-functions for both reward and constraints.
\begin{align} \label{eq: TD learning}
    \resizebox{!}{0.315cm}{$Q^{\pi_w}_{i,k+1}(s,a) = Q^{\pi_w}_{i,k} + \ell_k \left[ r_i(s,a) + \gamma Q^{\pi_w}_{i,k}(s^\prime,a^\prime)  - Q^{\pi_w}_{i,k}(s,a) \right]$},  
\end{align}
where $s \sim \mu_{\pi_w}, a\sim  \pi_w(s), s^\prime \sim \mathrm{P}(\cdot\mid s,a), a^\prime \sim \pi_w(s^\prime)$, $\ell_k$ is the learning rate and $i$ denotes the reward or any of the constraints.  $\bar{Q}_{i}(s,a)$ can be estimated via $Q^{\pi_w}_{i,K_{\text{TD}}}(s,a)$, where $K_{TD}$ is the iteration number of using TD learning methods. 

\subsubsection{Policy Improvement for Reward and Safety}
The policy gradient \cite{sutton1999policy} of the reward value function $f_r\left(\pi_w\right)$ has been derived as $\nabla f_r\left(\pi_w\right)=\mathbb{E}\left[Q^{\pi_w}_r(s, a) \phi_w(s, a)\right]$, where $\phi_w(s, a):=\nabla_w \log \pi_w(a \mid s)$ is the score function. Similarly, for the value function of cost $i$, we have $\nabla f_{c_i}\left(\pi_w\right)=\mathbb{E}\left[Q^{\pi_w}_{c_i}(s, a) \phi_w(s, a)\right]$.

In scenarios where the optimization of both reward and safety $i$ is desired, it is necessary to select a non-conflicting gradient descent $\boldsymbol{d}$ on the natural gradients of reward and cost gradients. This selection aims to optimize reward and safety individually, subject to the constraint that the KL divergence between the updated and previous policy remains below a specified threshold.
\begin{align}
    \boldsymbol{d} &= \frac{{\g_{r}^+} + {\g_{c_i}^+}}{2} \ \
    \text{or} \ \ \frac{{\g_{r}} + {\g_{c_i}}}{2}. 
    \label{eq:determine-update-direction-equation}
\end{align}

\subsubsection{Correlation-Reduction for Stochastic Gradient Manipulation}

In practical applications, the challenge of acquiring imprecise policy gradient feedback is frequently encountered. This imprecision stems from the restricted number of sampled trajectories employed to estimate $Q^{\pi_{w_t}}_r$ or $Q^{\pi{w_t}}_{c_i}$, subsequently introducing stochastic noise into the system. The study conducted in \cite{zhou2022convergence} has revealed that, within a stochastic setting, conventional gradient manipulation techniques may fail to converge to a optimal solution.

Let the weight factors $\boldsymbol{\lambda}t=(\lambda_t^r, \lambda_t^{c_1}, \ldots, \lambda_t^{c_n})$ be represented as $\boldsymbol{d} = \lambda_t^r \g_r + \lambda_t^{c_i} \g_{c_i}$ for the time step $t$. The primary cause of this convergence failure lies in the substantial correlation between the weight factors $\boldsymbol{\lambda}t$ and the stochastic gradients, resulting in a biased composite gradient. To address this issue within the context of the PCRPO framework, we concentrate on the specific conditions that ensure the variance of the natural policy gradient estimator progressively approaches zero. One possible approach to achieve this involves utilizing TD learning in \eqref{eq: TD learning} to estimate $Q^{\pi{w_t}}i$, assuming $K_{\text{TD}}$ is adequately large.

\subsection{Comparison to CRPO}
Compared to CRPO \cite{xu2021crpo}, a primal safe RL algorithm, our algorithm exhibits two distinct differences: the addition of upper and lower slack values to the constraint thresholds, and the unique approach we take to optimize the policy concerning both reward and safety. CRPO focuses on optimizing for reward, only shifting to safety optimization if a safety constraint is hard violated. This can lead to a back-and-forth between reward and cost optimizations, particularly when constraints are near their threshold boundaries. To circumvent such oscillations and to prevent any performance degradation in other objectives, we employ a projected gradient descent approach. This ensures a balanced and efficient way of handling both reward and safety concerns.


\subsection{Convergence Analysis}
For the iterates $\pi_{w_t}$ manipulated using our proposed method, \textbf{we can guarantee performance monotonic improvement and convergence to the optimal performance in cases where $180^\circ > \theta \geq 0^\circ$}. Our theorem enables the derivation of other settings in a straightforward manner. Please refer to Appendix~B for detailed theorems and their corresponding proofs. 

\section{Experiments}
\label{section:experiments}

In the experimental section, we investigate the constraint satisfaction of policies trained using our proposed method and compare its performance with state-of-the-art (SOTA) safe RL algorithms. Employing the  developed benchmark, \textit{Safety-MuJoCo Benchmark}, we first compare our approach with a representative primal optimization-based safe RL method, CRPO~\cite{xu2021crpo}, a strong baseline established in 2021. CRPO demonstrates superior performance in comparison to several SOTA baselines such as PDO~\cite{ray2019benchmarking}.

To further emphasize the effectiveness of our method, we implement our algorithm on a popular benchmark,  \textit{Omnisafe}~\cite{ji2023omnisafe}, and  compare it with several representative primal-dual optimization-based safe RL methods. These methods include SOTA baselines, such as PCPO~\cite{yang2020projection}, CUP~\cite{yang2022constrained}, and PPO Lagrangian (PPOLag)~\cite{ji2023omnisafe}. By contrasting our approach with these established methods, we aim to demonstrate the potential advantages and improvements that our proposed method offers in terms of safety and efficiency for RL applications with safety constraints.

In particular, our proposed method belongs to the category of primal optimization-based safe RL approaches, thereby avoiding the challenges associated with hyperparameter tuning related to dual variables. Additionally, our method does not necessitate feasible initialization, in contrast to some primal-dual optimization-based methods where poor initialization can adversely impact performance~\cite{xu2021crpo}. By circumventing these challenges, our proposed method aims to provide a more reliable and efficient solution for ensuring safety in RL applications. Additionally, we conduct ablation studies to investigate the impact of diverse cost limits and sensitivity to slack bounds.

\subsection{Experiments on \textit{Safety-MuJoCo Benchmark}}

We have developed a \textit{Safety-MuJoCo Benchmark}\footnote{\url{https://github.com/SafeRL-Lab/Safety-MuJoCo.git}} based on MuJoCo~\cite{todorov2012mujoco, towers_gymnasium_2023} to evaluate the performance of safe RL algorithms. This benchmark differs from traditional safe RL benchmarks, such as \textit{Omnisafe}\footnote{\url{https://github.com/PKU-Alignment/omnisafe.git}}~\cite{ji2023omnisafe} that is developed based on Safety Gym\footnote{\url{https://github.com/openai/safety-gym.git}}~\cite{ray2019benchmarking}. In \textit{Omnisafe}, the cost constraint is set as the velocity limit, and the reward is determined by the speed at which the robot runs. Conversely, our benchmark considers not only velocity constraints but also the health of the robot. For example, whether the robot falls or whether its joints exceed the limit values of motion control are taken into account. For a comprehensive description of these considerations, please refer to Appendix~C. As depicted in Figure~\ref{fig:compared-pcrpo-walker-reacher-humanoidstandup-cost-limit-40-slack-5} (a) and (b), our method demonstrates remarkably superior performance in comparison to CRPO with respect to both reward maximization and safety preservation. Similarly, Figure~\ref{fig:compared-pcrpo-walker-reacher-humanoidstandup-cost-limit-40-slack-5} (c) and (d) illustrate our method's significant improvement over CRPO in terms of reward and cost performance.

\begin{figure}[tb!]
 \centering
 \subcaptionbox{}
 {
\includegraphics[width=0.4725\linewidth]{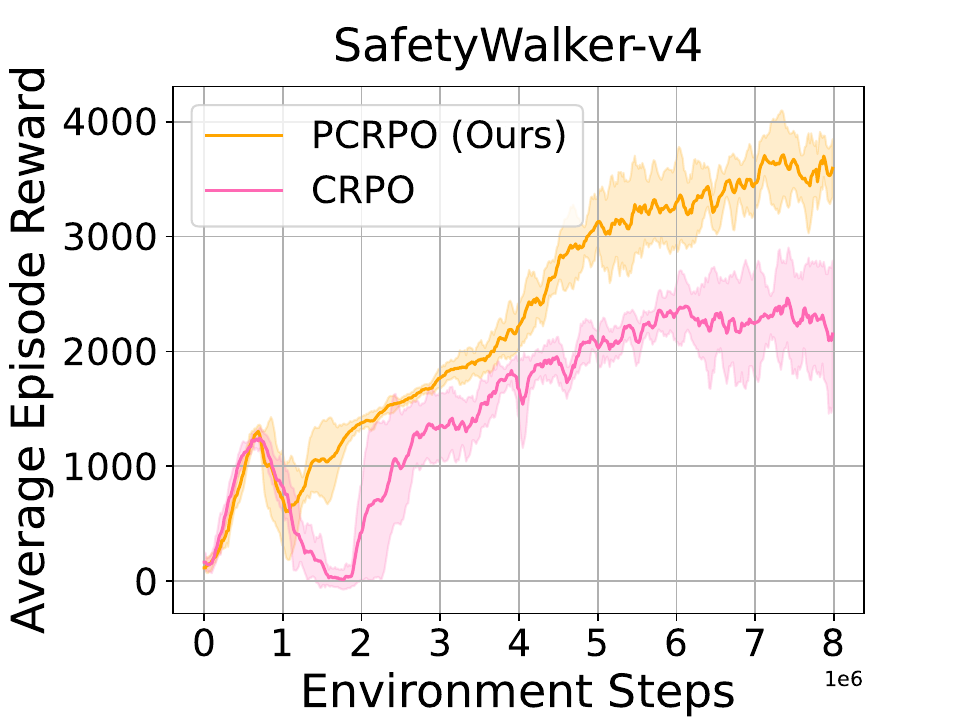}
}
 \subcaptionbox{}
 {
\includegraphics[width=0.4725\linewidth]{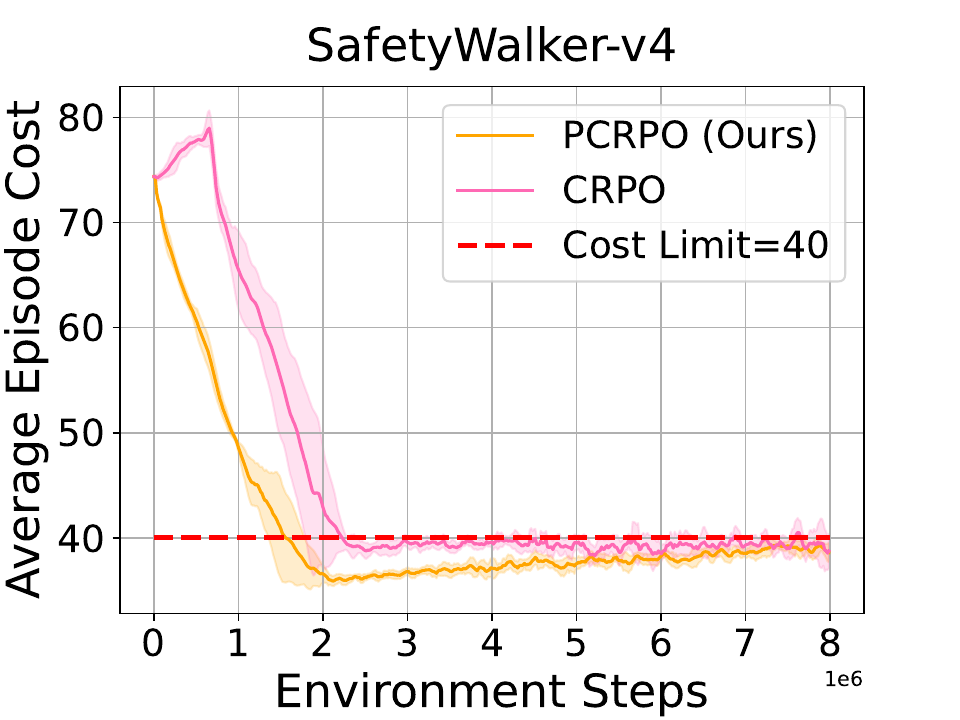}
}
\subcaptionbox{}
 {
\includegraphics[width=0.4725\linewidth]{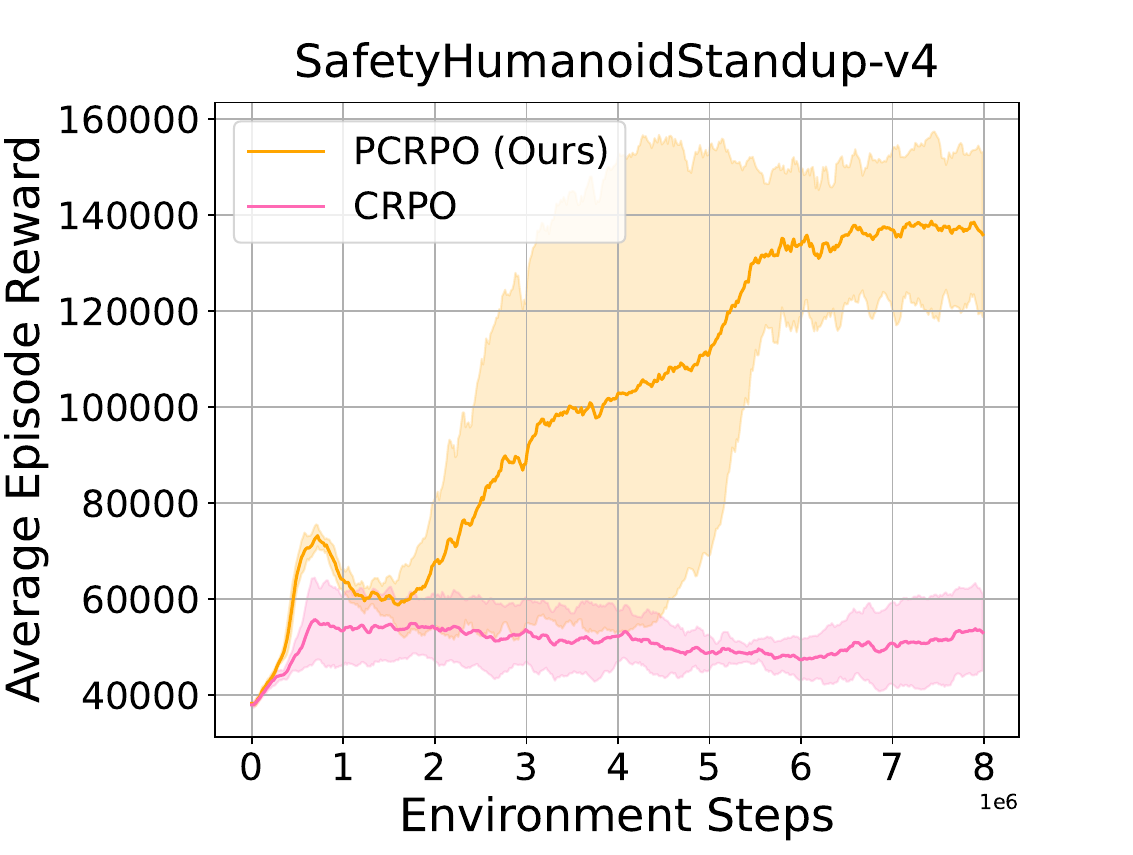}
}
 \subcaptionbox{}
 {
\includegraphics[width=0.4725\linewidth]{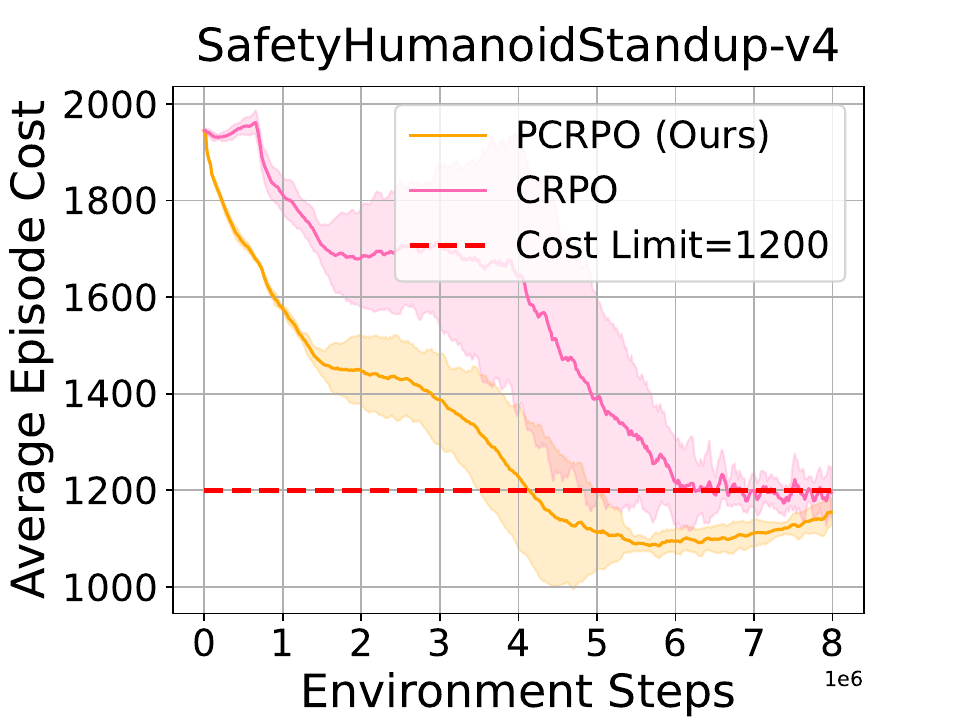}
}

 	\caption{\normalsize   Compared with CRPO on the SafetyWalker and SafetyHumanoidStandup Tasks. To encourage more learning exploration, we initiate the optimization of safety after $640000$ steps. 
 	} 
  \label{fig:compared-pcrpo-walker-reacher-humanoidstandup-cost-limit-40-slack-5}
 \end{figure}

\subsection{Experiments on \textit{Omnisafe} Benchmark}

We implement our algorithm on \textit{Omnisafe} and compare its performance with several SOTA baselines within the \textit{Omnisafe} framework. The safety constraint is set as a constant threshold, where if the agent moves at a higher velocity than this threshold, it incurs a cost of $1$ per time step. We test our method, PCRPO, alongside PCPO, CUP, and PPOLag methods on various environments, including Hopper and Ant. 

As illustrated in Figure~\ref{fig:compared-omnisafe-Ant-hopper-swimmer-baselines} (a) and (b), on the SafetyHopperVelocity-v1 task, our algorithm exhibits superior reward performance compared to SOTA baselines and maintains reliable safety. In contrast, SOTA baselines such as CUP and PPOLag struggle to ensure safety, and their reward performance is worse than our algorithm. Notably, our approach outperforms PCPO in both reward and safety performance. In Figure~\ref{fig:compared-omnisafe-Ant-hopper-swimmer-baselines} (c) and (d), our algorithm effectively ensures complete safety on the SafetyAntVelocity-v1 task while achieving comparable reward performance. Specifically, our algorithm demonstrates greater safety than CUP and PPOLag, which can not ensure safety on the task. While PCPO can also ensure safety, its reward performance is inferior to our algorithm. Furthermore, our algorithm demonstrates faster convergence than the baselines.

 \begin{figure}[t]
 \centering
 \subcaptionbox{}
 {
\includegraphics[width=0.4725\linewidth]{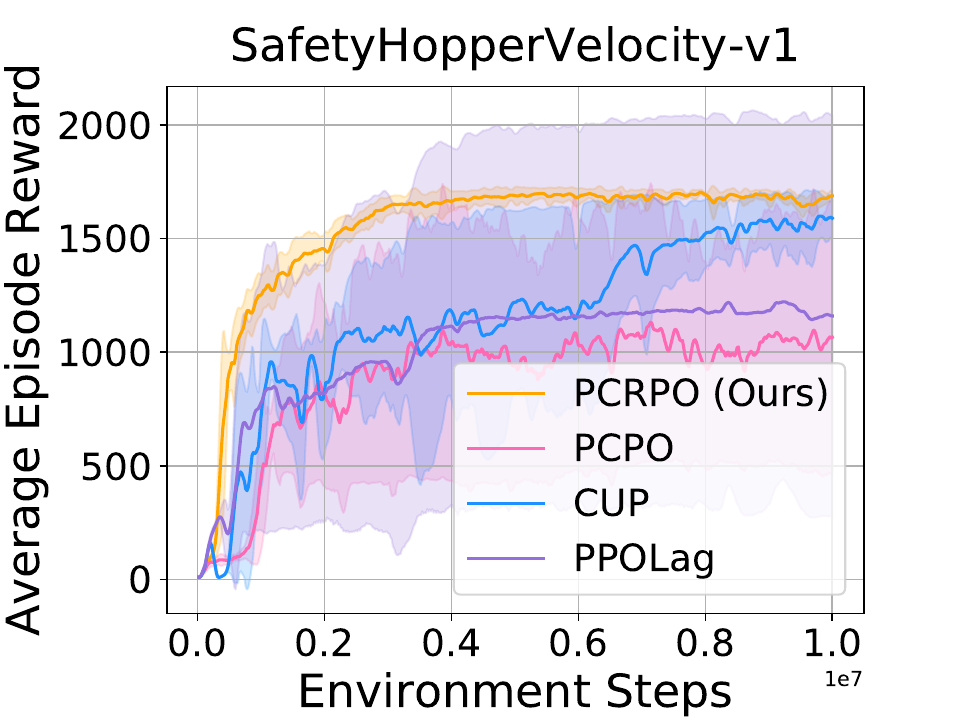}
}
 \subcaptionbox{}
 {
\includegraphics[width=0.4725\linewidth]{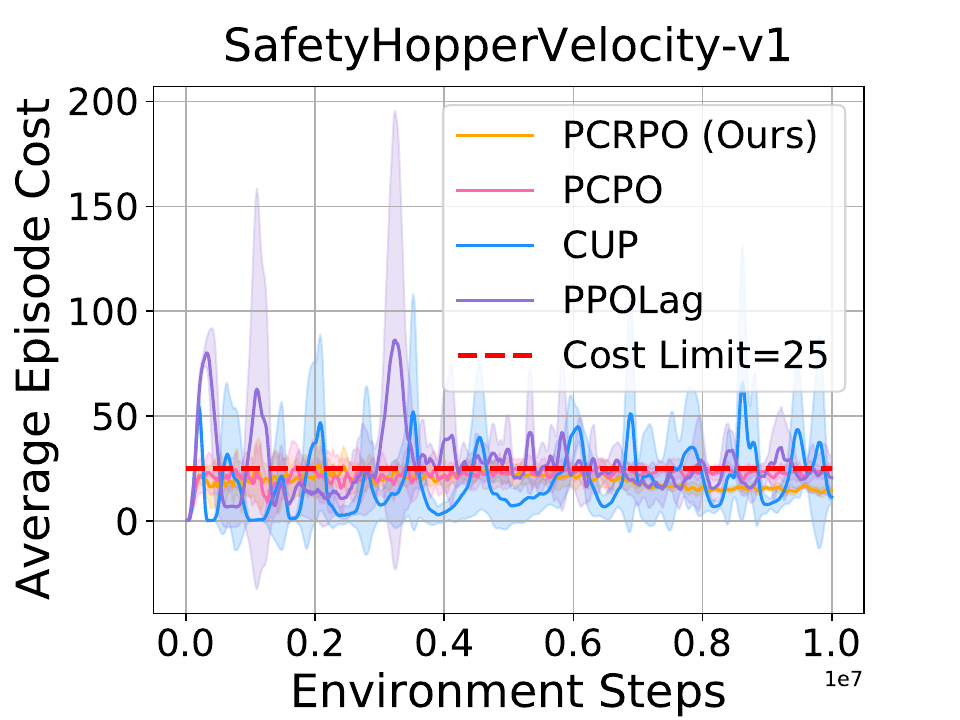}
}

 \subcaptionbox{}
 {
\includegraphics[width=0.4725\linewidth]{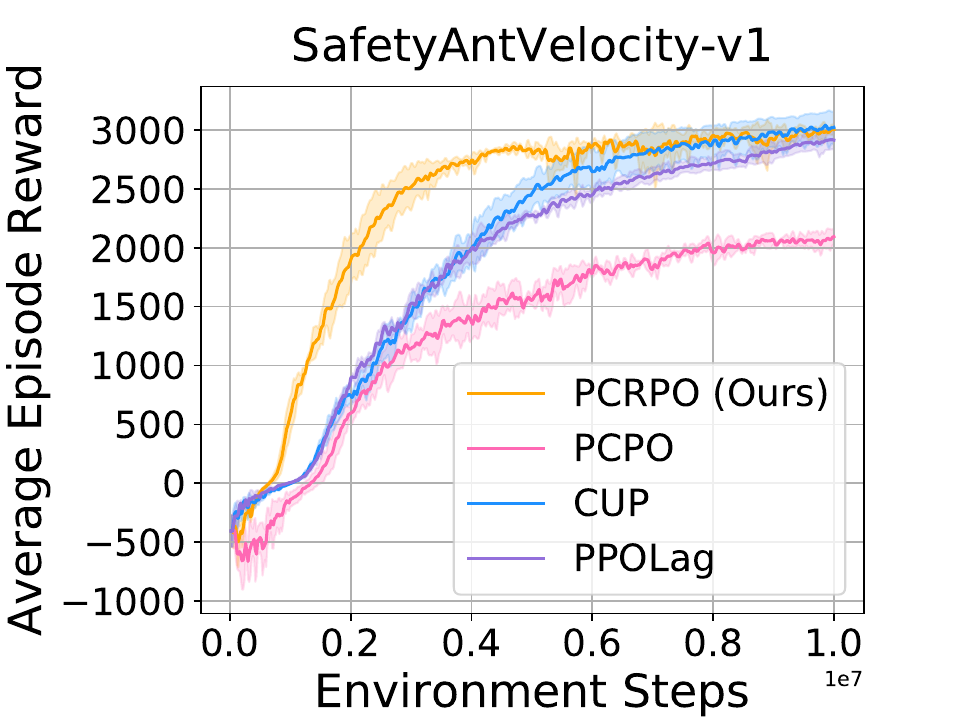}
}
 \subcaptionbox{}
 {
\includegraphics[width=0.4725\linewidth]{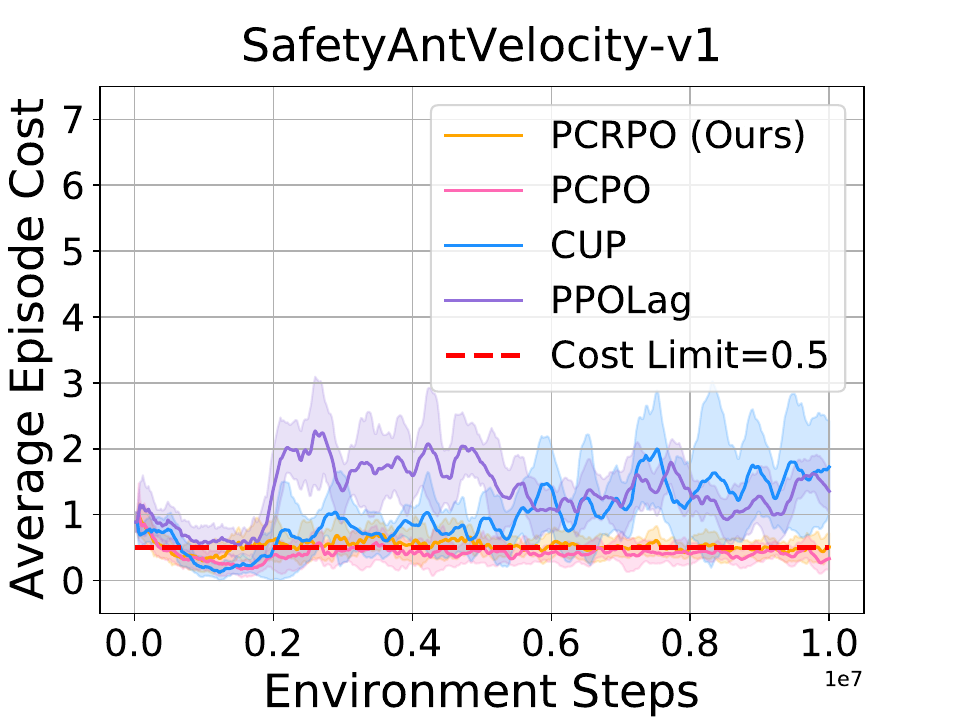}
}
 	\caption{\normalsize 
   Compared with PCPO, CUP, PPOLag baselines on SafetyHopperVelocity-v1 and SafetyAntVelocity-v1 tasks.  
 	} 
  \label{fig:compared-omnisafe-Ant-hopper-swimmer-baselines}
 \end{figure}

\subsection{Ablation Experiments}

  \begin{figure}[tb!]
 \centering

 \subcaptionbox{}
 {
\includegraphics[width=0.4725\linewidth]{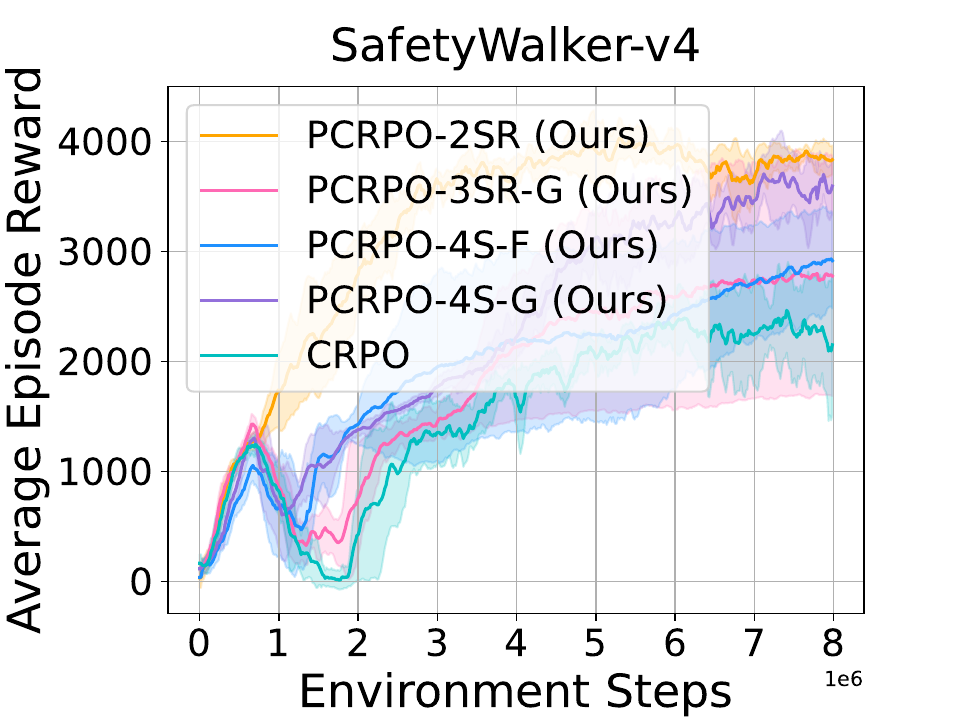}
}
 \subcaptionbox{}
 {
\includegraphics[width=0.4725\linewidth]{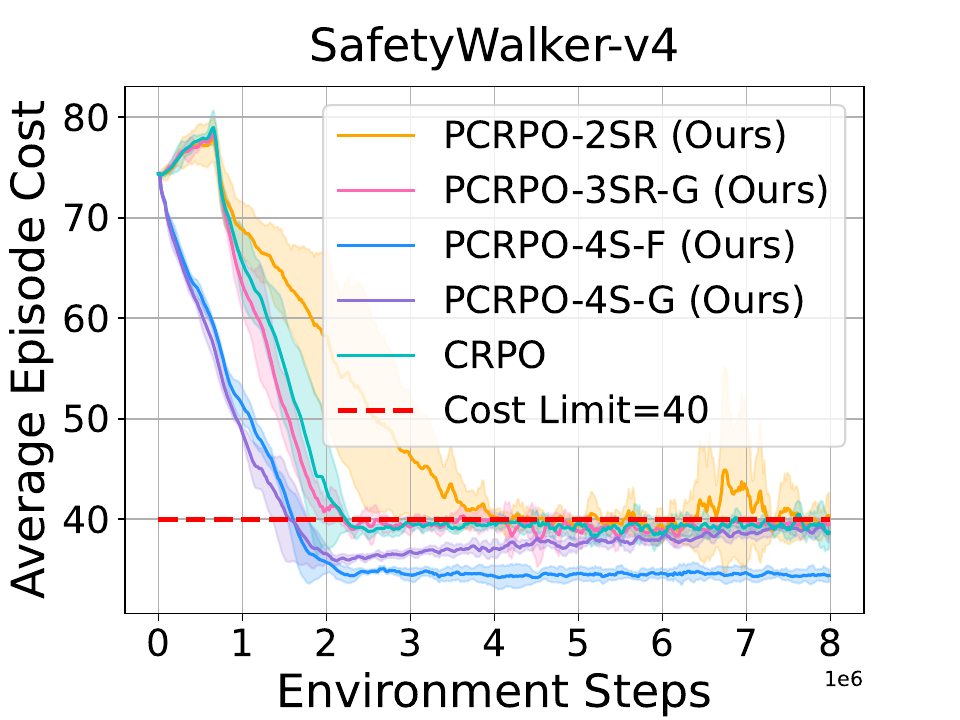}
}
 \subcaptionbox{}
 {
\includegraphics[width=0.4725\linewidth]{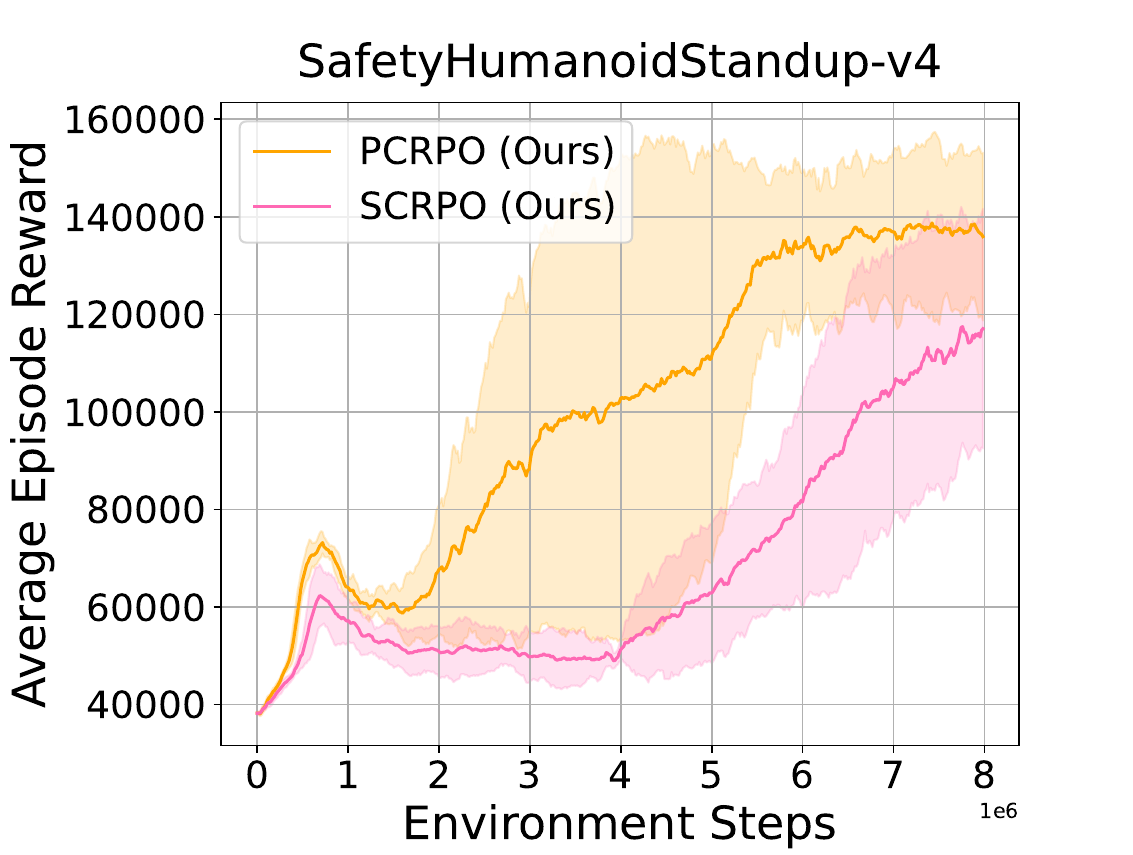}
}
 \subcaptionbox{}
 {
\includegraphics[width=0.4725\linewidth]{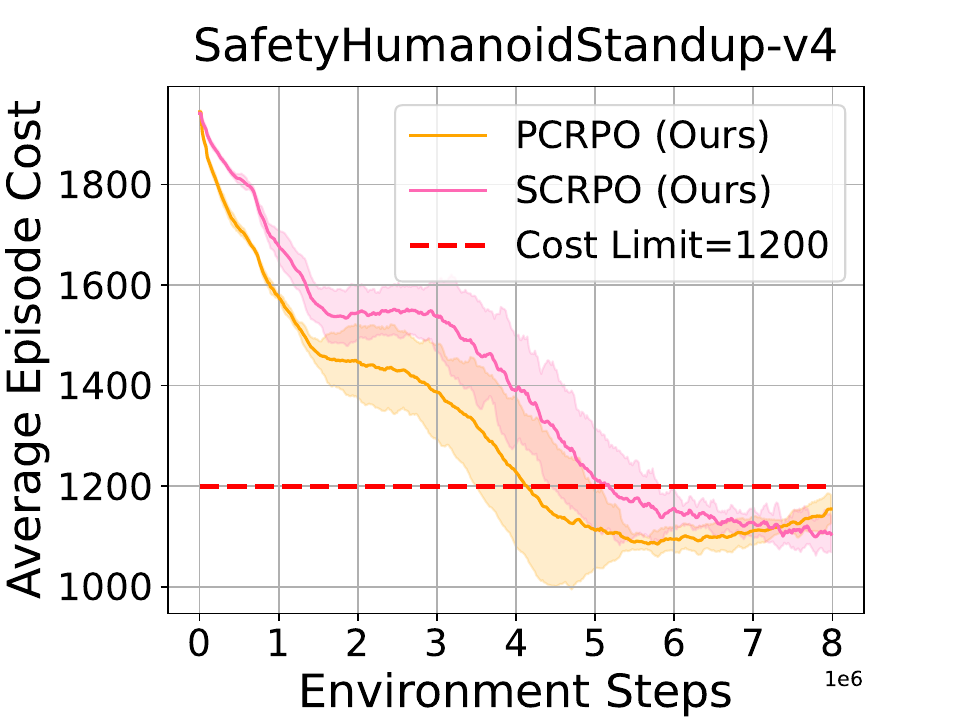}
}
    \vspace{-5pt}
 	\caption{\normalsize 
  ({a} $\&$ {b}) Ablation experiments of different slack settings on the SafetyWalker task.  ({c} $\&$ {d}) Ablation experiments of different gradient manipulation methods on the SafetyHumanoidStandup task. To encourage more learning exploration, we initiate the safety optimization after $640000$ steps on the SafetyWalker task. 
 	} 
  \label{fig:compared-omnisafe-Swimmer-ablation-baselines-cost-limit-all}
  \vspace{-15pt}
 \end{figure}

\subsubsection{Ablation Experiments of Slack Settings}

As illustrated in Figures~\ref{fig:compared-omnisafe-Swimmer-ablation-baselines-cost-limit-all} (a) and (b), we perform an ablation study on various slack settings. PCRPO-2SR represents $h^+_i \rightarrow +\infty, h^-_i =0$, where we primarily optimize reward while slightly ensuring safety. PCRPO-3SR-G denotes $h^+_i = 20, h^-_i =0$, with $h^+_i$ gradually decreasing to zero as the number of iteration steps increases. In this setting, we aim to optimize reward and safety simultaneously when $(b_i + h^+_i) > C_i > b_i$. 
PCRPO-4S-F corresponds to $h^+_i = 20, h^-_i =-20$, where we optimize safety and reward at the static slack boundary. As the experiment results show, our algorithm's cost value converges to the boundary. PCRPO-4S-G represents $h^+_i = 20, h^-_i =-20$, with $h^+_i$ and $h^-_i$ gradually decreasing to zero as the number of iteration steps increases. In this setting, we optimize safety and reward at the dynamic slack boundary, and as demonstrated by the experimental results, our algorithm's cost value converges to the cost limit while maintaining good reward performance. Notably, all our algorithms demonstrate superior results compared to CRPO in terms of balancing reward and safety optimization. This experimental setup allows us to analyze the impact of various slack configurations on the performance of our method, providing insights into the balance between reward and safety optimization.

\subsubsection{Ablation Experiments of Gradient Manipulation Methods}

As depicted in Figures~\ref{fig:compared-omnisafe-Swimmer-ablation-baselines-cost-limit-all} (c) and (d), we employ the gradient manipulation technique as outlined in Algorithm 1 of gradient surgery \cite{yu2020gradient} for learning safety, named as SCRPO. The experimental results demonstrate that PCRPO outperforms SCRPO in terms of safety and reward performance. These findings further corroborate the consistency of our theoretical analysis presented in the gradient analysis section.

\section{Conclusion}
\label{section:conclusion}

In this study, we address the issue of gradient conflicts between reward and cost by employing gradient manipulation. Specifically, we first propose a novel solution called PCRPO, which incorporates soft switching to balance reward and safety optimization in safe RL. Moreover, a slack technique is developed to help alleviate the conflict between reward and safety optimization. Our theoretical analysis demonstrates that our method can guarantee performance monotonic improvement while also analyzing the upper and lower bounds of the performance update. Then, we evaluate the effectiveness of our method using the \textit{Safety-MuJoCo Benchmark} that we developed, as well as a popular safe RL benchmark, \textit{Omnisafe}. Finally, the experimental results show that our method outperforms the strong baselines, indicating its superior performance in addressing the challenges associated with safe RL.


\bibliography{files/ref}

\appendix
\onecolumn
\begin{center}
\textbf{\Large Appendix}
\end{center}
\section{Practical Algorithm}
\label{appendix:practical-algorithms}

\begin{algorithm}[ht!] 
\caption{\textbf{PCRPO}: A Framework of Projection Constraint-Soft-Rectified Policy Optimization }
\label{alg:pcrpo-framework}
\begin{algorithmic}[1]
\STATE \textbf{Inputs}: initial policy with parameters $\pi_{w_0}$, positive slack value $h^+_t$, negative slack value $h^-_t$, the cost value of constraint $i$ as $V^{\pi_{w_0}}_{c_{i,t}}(\rho)$ at step $t$, the cost limit of constraint $i$ as $b_i$.
\FOR{$t = 0, \dots,T-1$}
\STATE Policy evaluation under $\pi_{w_t}$ involves estimating the values of rewards and constraints. 
\STATE Sample pairs $(s_j, a_j)$ from the buffer $\mathcal{B}_t$ according to the distribution $\rho \cdot \pi_{w_t}$ and compute the estimation $V^{\pi_{w_t}}_{r,t}(\rho)$ and $V^{\pi_{w_t}}_{c_{i,t}}(\rho)$, where $s_j$ represents the state and $a_j$ represents the action, $j$ is is the index for the sampled pairs.
\IF{ $h^+_t = +\infty, h^-_t = 0 $ }
\IF{For any $i$, $b_i > V^{\pi_{w_t}}_{c_{i,t}}(\rho)$}
\STATE Update policy ${\pi_{w_t}}$ to maximize reward value $V^{\pi_{w_t}}_{r,t}(\rho)$ with Equation~(\ref{eq:update-policy-parameters-for-reward}). 
\ELSE
\STATE Compute the update direction $\boldsymbol{d}$ with Equation~(\ref{eq:determine-update-direction-equation}), and update the projection policy leveraging the obtained $\boldsymbol{d}$ and Equation (\ref{eq:projection-gradient-one}).
\ENDIF
\ELSIF{$h^+_t = 0, h^-_t = -\infty $}
\IF{For any $i$, $b_i > V^{\pi_{w_t}}_{c_{i,t}}(\rho)$}
\STATE Compute the update direction $\boldsymbol{d}$ with Equation~(\ref{eq:determine-update-direction-equation}), and update the projection policy leveraging the obtained $\boldsymbol{d}$ and Equation (\ref{eq:projection-gradient-one}). 
\ELSE
\STATE Update policy to minimize cost $V^{\pi_{w_t}}_{c_{i,t}}(\rho)$ and ensure safety with Equation~(\ref{eq:update-policy-parameters-for-safety}). 
\ENDIF
\ELSIF{$+\infty> h^+_t > 0, 0 > h^-_t > -\infty $}
\IF{$h^+$ adaptive decreases}
\STATE $h^+_t \leftarrow h^+_t - h^+_t/T$ 
\ENDIF
\IF{$h^-_t$ adaptive increases}
\STATE $h^-_t \leftarrow h^-_t -  h^-_t/T$ 
\ENDIF
\IF{For any $i$, $V^{\pi_{w_t}}_{c_{i,t}}(\rho) > (h^+_t + b_i)$ }
\STATE  Choose any unsatisfied constraint $i_t$ and update policy to ensure safety with Equation~(\ref{eq:update-policy-parameters-for-safety}). 
\ELSIF{For any $i$, $(h^-_t + b_i)< V^{\pi_{w_t}}_{c_{i,t}}(\rho) < (h^+_t + b_i)$}
\STATE Compute the update direction $\boldsymbol{d}$ with Equation~(\ref{eq:determine-update-direction-equation}), and update the projection policy leveraging the obtained $\boldsymbol{d}$ and Equation (\ref{eq:projection-gradient-one}).
\ELSIF{For any $i$, $V^{\pi_{w_t}}_{c_{i,t}}(\rho) < (h^-_t + b_i)$}
\STATE Update policy ${\pi_{w_t}}_{t}$ to maximize reward $V^{\pi_{w_t}}_{r,t}(\rho)$ with Equation~(\ref{eq:update-policy-parameters-for-reward}). 
\ENDIF
\ENDIF
\ENDFOR{}
\STATE \textbf{Outputs}: ${\pi_{w_t}}_{\text{out}}$.
\end{algorithmic}
\end{algorithm}

\section{Guarantees of  Monotonic Performance Improvement and  Convergence}
\label{appendix:monotonic-guarantee}

    

\begin{theorem}\label{appendix-theorem: Performance improvement bound-case-one}
Under the assumption of Lipschitz continuity with a constant L, for the iterates $\pi_{w_t}$ generated through our gradient manipulation, when $180^\circ > \theta \geq 90^\circ$, as depicted in Equation(\ref{appendix-eq:upper-bound-lower-bound-theta-more-than-90-degree}), both the upper and lower bounds of performance updates can be observed. As demonstrated in Equation~(\ref{appendix-eq:lower-bound-positive-theta-more-than-90-degree}), {our method can guarantee  performance monotonic improvement and converge to the optimal performance.}

 \begin{equation}
    \begin{aligned}
    \label{appendix-eq:upper-bound-lower-bound-theta-more-than-90-degree}
       &\frac{1}{L} \cdot\left(\frac{5\|\g_r\|^2 + 5\|\g_c\|^2 -5cos(\theta)\|\g_r\|^2 -5cos(\theta)\|\g_c\|^2  + cos^2(\theta) - cos(\theta)\|\g_r\|\|\g_c\|}{8}\right)  \geq \\& f(w_{t+1}) - f(w_t)  \geq \eta \cdot\left(\frac{3\|\g_r\|^2 + 3\|\g_c\|^2 -3cos^2(\theta)\|\g_r\|^2 -3cos^2(\theta)\|\g_c\|^2 -  2cos^3(\theta)\|\g_r\|\|\g_c\| + 2cos(\theta)\|\g_r\|\|\g_c\|}{8}\right). 
    \end{aligned}
\end{equation}
In the case of $180^\circ > \theta \geq 90^\circ$, the following property holds,
 \begin{equation}
    \begin{aligned}  
\label{appendix-eq:lower-bound-positive-theta-more-than-90-degree}
       & {3\|\g_r\|^2 + 3\|\g_c\|^2  -  2cos^3(\theta)\|\g_r\|\|\g_c\| } > 3cos^2(\theta)\|\g_r\|^2 
 + 3cos^2(\theta)\|\g_c\|^2 -2cos(\theta)\|\g_r\|\|\g_c\|
       \\
       & \Longrightarrow f(w_{t+1}) - f(w_t) > 0.       
    \end{aligned}
\end{equation}

\end{theorem}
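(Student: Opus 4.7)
The plan is to treat the performance function $f$ as $L$-smooth---the natural reading of ``Lipschitz continuity with constant $L$'' in this context, meaning $\nabla f$ is $L$-Lipschitz---and to apply the quadratic descent lemma in both directions to the update $w_{t+1} = w_t + \eta\,\g$ with $\g = \frac{1}{2}(\g_r^+ + \g_c^+)$. Concretely, $L$-smoothness gives
\begin{equation*}
\eta\langle\nabla f(w_t),\g\rangle - \frac{L\eta^2}{2}\|\g\|^2 \;\le\; f(w_{t+1})-f(w_t) \;\le\; \eta\langle\nabla f(w_t),\g\rangle + \frac{L\eta^2}{2}\|\g\|^2,
\end{equation*}
and the standard stepsize choice $\eta = 1/L$ produces the $\frac{1}{L}$ scaling on the upper bound and the $\eta$ scaling on the lower bound that appear in Eq.~(\ref{appendix-eq:upper-bound-lower-bound-theta-more-than-90-degree}).

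The heart of the proof is a closed-form evaluation of $\langle\nabla f(w_t),\g\rangle$ and $\|\g\|^2$. I would identify $\nabla f(w_t) = \g_r + \g_c$ as the composite ascent direction consistent with the reward update~(\ref{eq:update-policy-parameters-for-reward}) and the cost update~(\ref{eq:update-policy-parameters-for-safety}), and then exploit the defining orthogonalities of Eq.~(\ref{eq:initial-gradient-projection}): $\g_c \cdot \g_r^+ = 0$ and $\g_r \cdot \g_c^+ = 0$. Combined with $\g_r\cdot\g_r^+ = \|\g_r\|^2\sin^2\theta$ and $\g_c\cdot\g_c^+ = \|\g_c\|^2\sin^2\theta$, the inner product collapses to $\langle\nabla f(w_t),\g\rangle = \frac{1}{2}(\|\g_r\|^2+\|\g_c\|^2)\sin^2\theta$. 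For the squared norm, a direct expansion using $\g_r\cdot\g_c = \cos\theta\,\|\g_r\|\|\g_c\|$ yields $\|\g\|^2 = \frac{1}{4}\sin^2\theta\,(\|\g_r\|^2+\|\g_c\|^2 - 2\cos\theta\,\|\g_r\|\|\g_c\|)$, with the cross term $\g_r^+\cdot\g_c^+ = -\cos\theta\sin^2\theta\,\|\g_r\|\|\g_c\|$ obtained by direct substitution.

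Plugging both expressions into the descent-lemma bounds and using $\sin^2\theta = 1-\cos^2\theta$ turns each side into a polynomial in $\cos\theta$, $\|\g_r\|$, $\|\g_c\|$ that---up to the stated coefficients---matches Eq.~(\ref{appendix-eq:upper-bound-lower-bound-theta-more-than-90-degree}). The lower bound factors cleanly as
\begin{equation*}
\frac{\eta\sin^2\theta}{8}\bigl[\,3(\|\g_r\|^2+\|\g_c\|^2) + 2\cos\theta\,\|\g_r\|\|\g_c\|\,\bigr],
\end{equation*}
and its positivity for $\theta\in[90^\circ,180^\circ)$ follows at once from $\sin^2\theta>0$ on that interval together with the AM--GM inequality $\|\g_r\|^2+\|\g_c\|^2\ge 2\|\g_r\|\|\g_c\|$ and $|\cos\theta|\le1$, which leaves at least $4\|\g_r\|\|\g_c\|>0$ inside the brackets. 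Monotonic improvement $f(w_{t+1})>f(w_t)$ then follows whenever the gradients are nonzero, and a standard telescoping argument against any upper bound on $f$ forces $\|\g\|\to 0$, yielding convergence to a stationary point.

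\textbf{Main obstacle.} The hardest part will be the bookkeeping in the assembly step: matching the asymmetric coefficients in the stated upper bound---for instance the lone $\cos^2\theta$ summand and the $-5\cos\theta\|\g_r\|^2$, $-5\cos\theta\|\g_c\|^2$ terms, which are not symmetric in $\cos\theta$ versus $\cos^2\theta$---may require either a choice of $\beta_r^+,\beta_c^+$ slightly different from $\tfrac{1}{2}$ or a careful reinterpretation of what $\nabla f$ should be, and is essentially a routine but error-prone algebraic verification. The monotonic-improvement conclusion of Eq.~(\ref{appendix-eq:lower-bound-positive-theta-more-than-90-degree}), however, only hinges on the clean $\sin^2\theta$ factorization of the lower bound, so the convergence claim is robust to these cosmetic discrepancies.
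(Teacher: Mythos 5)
Your proposal is correct and follows essentially the same route as the paper's own proof: the two-sided descent lemma under $L$-smoothness with $\nabla f(w_t)=\g_r+\g_c$ and the step $\eta\bigl(\g_r^+ +\g_c^+\bigr)/2$, the same orthogonality-based evaluations of the inner product and squared norm, and the same $\sin^2\theta$ factorization plus AM--GM to conclude positivity of the lower bound (the paper likewise divides out $\sin^2\theta$ and uses $\|\g_r\|/\|\g_c\|+\|\g_c\|/\|\g_r\|\ge 2$). Your observation that the displayed upper bound's coefficients are typographical variants of the correct expression $5(1-\cos^2\theta)(\|\g_r\|^2+\|\g_c\|^2)+2\cos^3\theta\|\g_r\|\|\g_c\|-2\cos\theta\|\g_r\|\|\g_c\|$ is consistent with what the paper's own derivation actually produces.
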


\begin{proof}




When $\theta \geq 90^\circ$, under the assumption that the gradient function, $\nabla f(w)$, exhibits Lipschitz continuity with a constant $L$, it can be deduced that the difference between the Hessian matrix $\nabla^2 f(w)$ and the scaled identity matrix $LI$ is a negative semi-definite matrix~\cite{zhou2018fenchel}. By leveraging this property, one can conduct a quadratic expansion of the function $f(w)$ in the vicinity of $f(w)$, which subsequently leads to the derivation of the following inequality:

 \begin{equation}
\begin{aligned}
& f\left(w_{t+1}\right) \geq f(w_t) + \nabla f(w_t)^T\left(w_{t+1}-w_t\right) -\frac{1}{2} L\left\|w_{t+1}-w_t\right\|^2, \\
&
f\left(w_{t+1}\right) \leq f(w_t) + \nabla f(w_t)^T\left(w_{t+1}-w_t\right)  +\frac{1}{2} L\left\|w_{t+1}-w_t\right\|^2.
\end{aligned}
\end{equation}  


With Lipschitz continuity, the assumption holds,
\begin{equation}
\eta \leq \frac{1}{L}.
\end{equation}



Leveraging gradient update functions~(\ref{eq:update-policy-parameters-for-reward}) and (\ref{eq:update-policy-parameters-for-safety}). For the case, $\theta \geq 90^\circ$, with Equations~(\ref{eq:initial-gradient-projection}), (\ref{eq:two-projection-gradient}) and (\ref{eq:cos-vector-gradient}), we have the performance improvement upper bound:

 \begin{equation}
\begin{aligned}
f\left(w_{t+1}\right)  \leq& f(w_t)+\nabla f(w_t)^T\left(w_{t+1}-w_t\right)+\frac{1}{2} L\left\|w_{t+1}-w_t\right\|^2 
\\
 =& f(w_t)+ (\g_r + \g_c)^T\left(\eta \cdot\left(\frac{\g_r^++\g_c^+}{2}\right)\right)+\frac{1}{2} L\left\|\eta \cdot\left(\frac{\g_r^++\g_c^+}{2}\right)\right\|^2 
\\
 =& f(w_t)+ (\g_r + \g_c)^T\eta \cdot\left(\frac{\left(\g_r - \frac{\g_r \cdot \g_{c}}{\|\g_{c}\|^2} \g_{c}\right) +\left(\g_{c} - \frac{\g_c \cdot \g_{r}}{\|\g_{r}\|^2} \g_{r}\right)}{2}\right)
\\
& +\frac{1}{2} L\left\|\eta \cdot\left(\frac{\left(\g_r - \frac{\g_r \cdot \g_{c}}{\|\g_{c}\|^2} \g_{c}\right) + \left(\g_{c} - \frac{\g_c \cdot \g_{r}}{\|\g_{r}\|^2} \g_{r}\right)}{2}\right)\right\|^2 
\\
 = & f(w_t)+ \eta \cdot\left(\frac{\|\g_r\|^2 + \|\g_c\|^2 -cos^2(\theta)\|\g_r\|^2 -cos^2(\theta)\|\g_c\|^2}{2}\right)
\\
& +\frac{1}{2} L \eta^2 \cdot \left(\frac{\|\g_r\|^2 + \|\g_c\|^2 -cos^2(\theta)\|\g_r\|^2 -cos^2(\theta)\|\g_c\|^2 + 2cos^3(\theta)\|\g_r\|\|\g_c\| - 2cos(\theta)\|\g_r\|\|\g_c\|}{4}\right)
\\
 \leq &  f(w_t)+ \frac{1}{L} \cdot\left(\frac{5\|\g_r\|^2 + 5\|\g_c\|^2 -5cos^2(\theta)\|\g_r\|^2 -5cos^2(\theta)\|\g_c\|^2 +  2cos^3(\theta)\|\g_r\|\|\g_c\| - 2cos(\theta)\|\g_r\|\|\g_c\|}{8}\right)
 \\  \Longrightarrow 
 f(w_{t+1}) &- f(w_t)  \leq \frac{1}{L} \cdot\left(\frac{5\|\g_r\|^2 + 5\|\g_c\|^2 -5cos^2(\theta)\|\g_r\|^2 -5cos^2(\theta)\|\g_c\|^2 +  2cos^3(\theta)\|\g_r\|\|\g_c\| - 2cos(\theta)\|\g_r\|\|\g_c\|}{8}\right) \geq 0.
\end{aligned}
\end{equation}  

Similarly, we can have the lower bound, 

\begin{equation}
    \begin{aligned}
       f(w_{t+1}) - f(w_t)  \geq \eta \cdot\left(\frac{3\|\g_r\|^2 + 3\|\g_c\|^2 -3cos^2(\theta)\|\g_r\|^2 -3cos^2(\theta)\|\g_c\|^2 -  2cos^3(\theta)\|\g_r\|\|\g_c\| + 2cos(\theta)\|\g_r\|\|\g_c\|}{8}\right). 
    \end{aligned}
\end{equation}

Thus,  performance updates at each iteration $t$ can be bounded as follows,
\begin{equation}
    \begin{aligned}
       &\frac{1}{L} \cdot\left(\frac{5\|\g_r\|^2 + 5\|\g_c\|^2 -5cos(\theta)\|\g_r\|^2 -5cos(\theta)\|\g_c\|^2  + cos^2(\theta) - cos(\theta)\|\g_r\|\|\g_c\|}{8}\right)  \geq \\& f(w_{t+1}) - f(w_t)  \geq \eta \cdot\left(\frac{3\|\g_r\|^2 + 3\|\g_c\|^2 -3cos^2(\theta)\|\g_r\|^2 -3cos^2(\theta)\|\g_c\|^2 -  2cos^3(\theta)\|\g_r\|\|\g_c\| + 2cos(\theta)\|\g_r\|\|\g_c\|}{8}\right). 
    \end{aligned}
\end{equation}

When $90^\circ \leq \theta < 180^\circ$, the following property holds,
       \\
\begin{equation}
    \begin{aligned}    
       & {3(1-cos^2)\|\g_r\|^2 + 3(1-cos^2)\|\g_c\|^2   > 2cos(\theta)(cos^2(\theta)-1)\|\g_r\|\|\g_c\| }
       \\       
       & {3sin^2(\theta)\|\g_r\|^2 + 3sin^2(\theta)\|\g_c\|^2   > -2cos(\theta)sin^2(\theta)\|\g_r\|\|\g_c\| }
       \\
       & {3\|\g_r\|^2 + 3\|\g_c\|^2   > -2cos(\theta)\|\g_r\|\|\g_c\| }, \text{ if } sin(\theta)\neq 0
       \\
       & { \frac{\|\g_r\|}{\|\g_c\|} + \frac{\|\g_c\|}{\|\g_r\|}  > 1> -\frac{2}{3}cos(\theta)},
       \\
       &  \Longrightarrow {3\|\g_r\|^2 + 3\|\g_c\|^2 -3cos^2(\theta)\|\g_r\|^2 -3cos^2(\theta)\|\g_c\|^2 -  2cos^3(\theta)\|\g_r\|\|\g_c\| + 2cos(\theta)\|\g_r\|\|\g_c\|} > 0, 
       \\
       & \Longrightarrow f(w_{t+1}) - f(w_t) > 0.
    \end{aligned}
\end{equation}

This implies that, through our gradient manipulation, in instances where $180^\circ > \theta \geq 90^\circ$, the proposed method can guarantee monotonic  performance improvement. Furthermore, the method demonstrates the ability to converge towards optimal performance.
\end{proof}

\begin{theorem}\label{appendix-theorem: Performance improvement bound-case-two}
Under the assumption of Lipschitz continuity with a constant L, for the iterates $\pi_{w_t}$ generated through our gradient manipulation, in the case of $90^\circ > \theta \geq 0^\circ$, as depicted in Equation(\ref{appendix-eq:upper-bound-lower-bound-theta-less-than-90-degree}), both the upper and lower bounds of performance updates can be observed. As demonstrated in Equation~(\ref{appendix-eq:lower-bound-positive-theta-less-than-90-degree}), {our method can guarantee  performance monotonic improvement and converge to the optimal performance.}

\begin{equation}
\begin{aligned}
\label{appendix-eq:upper-bound-lower-bound-theta-less-than-90-degree}
\frac{1}{L} \cdot\left(\frac{5\|\g_r\|^2+10cos(\theta)\|\g_r\|\|\g_c\|+5\|\g_c\|^2}{8}\right)  \geq
f\left(w_{t+1}\right)  -f(w_t)
 \geq
 \eta \cdot\left(\frac{3\|\g_r\|^2+6cos(\theta)\|\g_r\|\|\g_c\|+3\|\g_c\|^2}{8}\right)
\end{aligned}
\end{equation}

\begin{equation}
\begin{aligned}
\label{appendix-eq:lower-bound-positive-theta-less-than-90-degree}
&\left(\frac{3\|\g_r\|^2+6cos(\theta)\|\g_r\|\|\g_c\|+3\|\g_c\|^2}{8}\right) > 0
\\
& \Longrightarrow
f\left(w_{t+1}\right)  -f(w_t) >0,
\end{aligned}
\end{equation}
\end{theorem}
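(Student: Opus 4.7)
The plan is to mirror the argument of Theorem B.1, but with the update rule that is active in the regime $\theta<90^\circ$. According to the soft-switching rule, when $\cos(\theta)>0$ the algorithm takes the averaged combination $\g=(\g_r+\g_c)/2$ (Equation~\ref{eq:two-projection-gradient-average}) rather than the projected combination. Thus the iterate satisfies $w_{t+1}-w_t=\eta(\g_r+\g_c)/2$, and I will plug this directly into the Lipschitz quadratic sandwich
\[
f(w_t)+\nabla f(w_t)^T(w_{t+1}-w_t)-\tfrac{L}{2}\|w_{t+1}-w_t\|^2 \le f(w_{t+1}) \le f(w_t)+\nabla f(w_t)^T(w_{t+1}-w_t)+\tfrac{L}{2}\|w_{t+1}-w_t\|^2,
\]
which follows from $L$-smoothness as in the earlier theorem, and take $\nabla f(w_t)=\g_r+\g_c$ and $\eta\le 1/L$ exactly as before.

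First I would compute the two basic quantities using Equation~(\ref{eq:cos-vector-gradient}):
\[
\nabla f(w_t)^T(w_{t+1}-w_t)=\tfrac{\eta}{2}\|\g_r+\g_c\|^2,\qquad \|w_{t+1}-w_t\|^2=\tfrac{\eta^2}{4}\|\g_r+\g_c\|^2,
\]
where $\|\g_r+\g_c\|^2=\|\g_r\|^2+2\cos(\theta)\|\g_r\|\|\g_c\|+\|\g_c\|^2$. Substituting these into the upper Lipschitz bound gives $f(w_{t+1})-f(w_t)\le\bigl(\tfrac{\eta}{2}+\tfrac{L\eta^2}{8}\bigr)\|\g_r+\g_c\|^2$, and using $\eta\le 1/L$ (so $L\eta^2\le\eta$) collapses the coefficient to $\tfrac{5}{8L}$, reproducing the upper bound in~(\ref{appendix-eq:upper-bound-lower-bound-theta-less-than-90-degree}). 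Similarly the lower Lipschitz bound yields $\bigl(\tfrac{\eta}{2}-\tfrac{L\eta^2}{8}\bigr)\|\g_r+\g_c\|^2\ge\tfrac{3\eta}{8}\|\g_r+\g_c\|^2$, matching the stated lower bound.

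For the strict positivity claim~(\ref{appendix-eq:lower-bound-positive-theta-less-than-90-degree}), I would simply observe that in the regime $90^\circ>\theta\ge 0^\circ$ we have $\cos(\theta)>0$, so
\[
\tfrac{3\|\g_r\|^2+6\cos(\theta)\|\g_r\|\|\g_c\|+3\|\g_c\|^2}{8}=\tfrac{3}{8}\|\g_r+\g_c\|^2\ge \tfrac{3}{8}(\|\g_r\|^2+\|\g_c\|^2)>0
\]
whenever the gradients do not both vanish, whence $f(w_{t+1})-f(w_t)>0$ and monotonic improvement follows; convergence to an optimum then follows from summing the telescoping improvement against the boundedness of $f$, exactly as in Theorem~\ref{appendix-theorem: Performance improvement bound-case-one}.

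Compared with the $\theta\ge 90^\circ$ case, I do not expect a real obstacle here: the algebra is actually simpler because the averaging update makes the inner product and the squared norm proportional to the same quantity $\|\g_r+\g_c\|^2$, so there are no projection cross-terms to track. The only point requiring care is verifying that the regime condition $\cos(\theta)>0$ is exactly what removes the need for the case-splitting gymnastics used in Theorem~\ref{appendix-theorem: Performance improvement bound-case-one} to certify positivity of the lower bound; beyond that, the derivation is a direct specialization of the previous argument.
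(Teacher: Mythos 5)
Your proposal is correct and follows essentially the same route as the paper's own proof: the same Lipschitz quadratic sandwich with $w_{t+1}-w_t=\eta(\g_r+\g_c)/2$, the same use of $\eta\le 1/L$ to collapse the coefficients to $\tfrac{5}{8L}$ and $\tfrac{3\eta}{8}$, and the same observation that $\cos(\theta)\ge 0$ makes the lower bound positive. Your added remark that positivity requires the gradients not to vanish simultaneously is a small but welcome precision the paper leaves implicit.
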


\begin{proof}
Similar to Theorem~\ref{appendix-theorem: Performance improvement bound-case-one}, the upper bound is observed,
 \begin{equation}
\begin{aligned}
f\left(w_{t+1}\right)  
 \leq& f(w_t)+\nabla f(w_t)^T\left(w_{t+1}-w_t\right)+\frac{1}{2} L\left\|w_{t+1}-w_t\right\|^2 
\\
 =& f(w_t)+ (\g_r + \g_c)^T\left(\eta \cdot\left(\frac{\g_r+\g_c}{2}\right)\right)+\frac{1}{2} L\left\|\eta \cdot\left(\frac{\g_r+\g_c}{2}\right)\right\|^2 
\\
 =& f(w_t)+ \eta \cdot\left(\frac{\|\g_r\|^2+2cos(\theta)\|\g_r\|\|\g_c\|+\|\g_c\|^2}{2}\right)+ L\eta^2 \cdot\left(\frac{\|\g_r\|^2+2cos(\theta)\|\g_r\|\|\g_c\|+\|\g_c\|^2}{8}\right) 
 \\
 \leq& f(w_t)+ \frac{1}{L} \cdot\left(\frac{\|\g_r\|^2+2cos(\theta)\|\g_r\|\|\g_c\|+\|\g_c\|^2}{2}\right)+ \frac{1}{L} \cdot\left(\frac{\|\g_r\|^2+2cos(\theta)\|\g_r\|\|\g_c\|+\|\g_c\|^2}{8}\right) 
 \\
 =& f(w_t)+\frac{1}{L} \cdot\left(\frac{5\|\g_r\|^2+10cos(\theta)\|\g_r\|\|\g_c\|+5\|\g_c\|^2}{8}\right).
\end{aligned}
\end{equation} 

Similarly, we can have the lower bound,
\begin{equation}
\begin{aligned}
f\left(w_{t+1}\right)  
 \geq& f(w_t)+\nabla f(w_t)^T\left(w_{t+1}-w_t\right)-\frac{1}{2} L\left\|w_{t+1}-w_t\right\|^2 
\\
 =& f(w_t)+ (\g_r + \g_c)^T\left(\eta \cdot\left(\frac{\g_r+\g_c}{2}\right)\right)-\frac{1}{2} L\left\|\eta \cdot\left(\frac{\g_r+\g_c}{2}\right)\right\|^2 
\\
 =& f(w_t)+ \eta \cdot\left(\frac{\|\g_r\|^2+2cos(\theta)\|\g_r\|\|\g_c\|+\|\g_c\|^2}{2}\right)- L\eta^2 \cdot\left(\frac{\|\g_r\|^2+2cos(\theta)\|\g_r\|\|\g_c\|+\|\g_c\|^2}{8}\right)
 \\
 \geq& f(w_t)+ \eta \cdot\left(\frac{\|\g_r\|^2+2cos(\theta)\|\g_r\|\|\g_c\|+\|\g_c\|^2}{2}\right)- \eta \cdot\left(\frac{\|\g_r\|^2+2cos(\theta)\|\g_r\|\|\g_c\|+\|\g_c\|^2}{8}\right) 
 \\
 =& f(w_t)+\eta \cdot\left(\frac{3\|\g_r\|^2+6cos(\theta)\|\g_r\|\|\g_c\|+3\|\g_c\|^2}{8}\right). 
\end{aligned}
\end{equation} 

Thus, in this case of $90^\circ > \theta \geq 0^\circ$, at each iteration $t$, we can have the performance update bound, 

\begin{equation}
\begin{aligned}
\frac{1}{L} \cdot\left(\frac{5\|\g_r\|^2+10cos(\theta)\|\g_r\|\|\g_c\|+5\|\g_c\|^2}{8}\right)  \geq
f\left(w_{t+1}\right)  -f(w_t)
 \geq
 \eta \cdot\left(\frac{3\|\g_r\|^2+6cos(\theta)\|\g_r\|\|\g_c\|+3\|\g_c\|^2}{8}\right).
\end{aligned}
\end{equation}

In this case, $cos(\theta) \geq 0$, we can have
\begin{equation}
\begin{aligned}
&\left(\frac{3\|\g_r\|^2+6cos(\theta)\|\g_r\|\|\g_c\|+3\|\g_c\|^2}{8}\right) > 0
\\
& \Longrightarrow
f\left(w_{t+1}\right)  -f(w_t) >0,
\end{aligned}
\end{equation}

This implies that the proposed method ensures a monotonic enhancement in performance, ultimately converging to the optimal value.
\end{proof}

\section{Details of Experiments}
\label{append:Details-of-Experiments}

\subsection{Environment Settings}

\begin{figure}[b!]
 \centering
\subcaptionbox{}
  {
\includegraphics[width=0.134\linewidth]{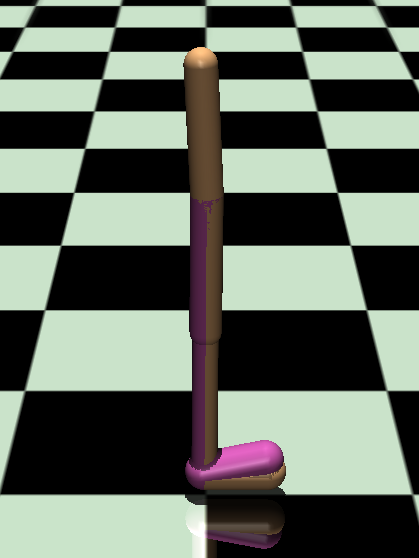}
}
 \subcaptionbox{}
  {
\includegraphics[width=0.38\linewidth]{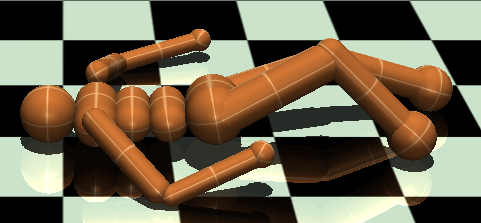}
}
 \subcaptionbox{}
  {
\includegraphics[width=0.26\linewidth]{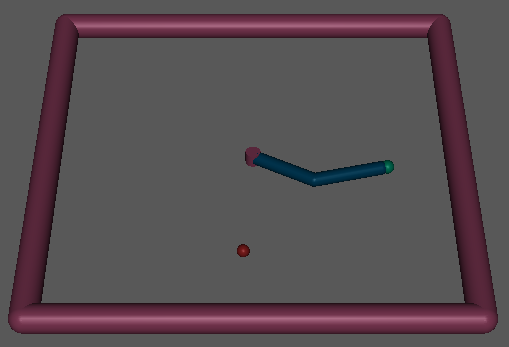}
}
 \subcaptionbox{}
  {
\includegraphics[width=0.085\linewidth]{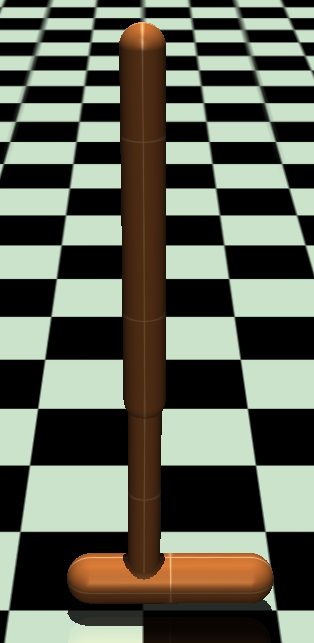}
}
 \subcaptionbox{}
  {
\includegraphics[width=0.21\linewidth]{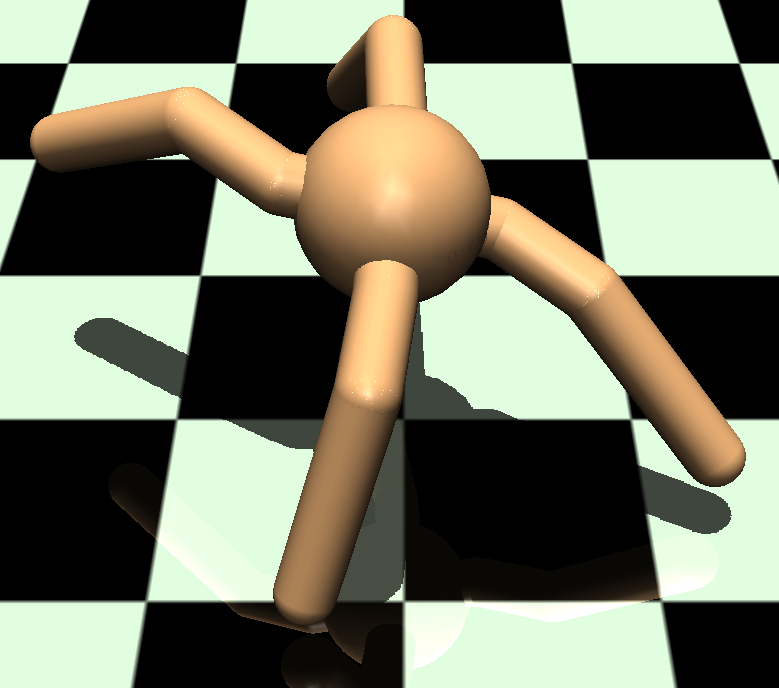}
}
 \subcaptionbox{}
  {
\includegraphics[width=0.259\linewidth]{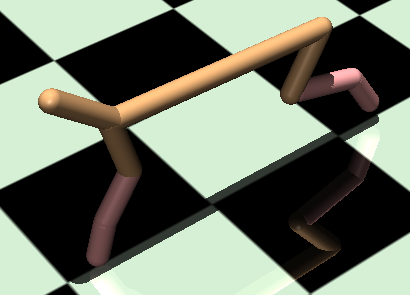}
}
 \subcaptionbox{}
  {
\includegraphics[width=0.246\linewidth]{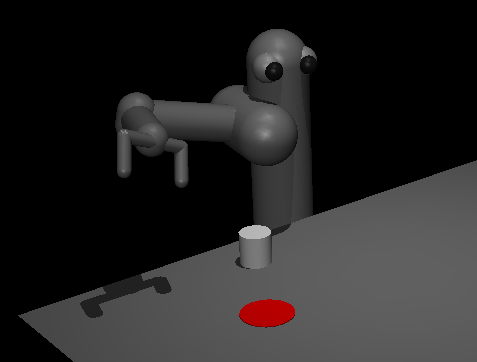}
}
 \subcaptionbox{}
  {
\includegraphics[width=0.121\linewidth]{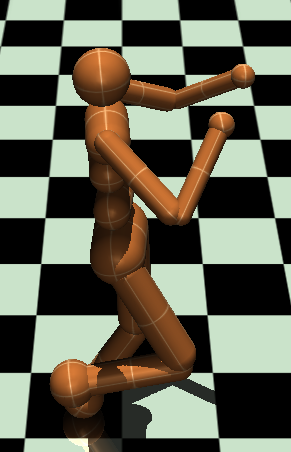}
}
    \vspace{-0pt}
 	\caption{\normalsize Safety-MuJoCo Environments: SafetyWalker-4~(a), SafetyHumanoidStandup-v4~(b), SafetyReacher-v4~(c), SafetyHopper-v4~(d), SafetyAnt-v4~(e), SafeHalfCheetah-v4~(f), SafetyPusher-v4~(g) and SafetyHumanoid-v4~(h).
 	} 
  \label{fig:safetymujoco-envs-pic-mujoco-tasks}
 \end{figure} 

\paragraph{SafetyWalker-v4.} 
As depicted in Figure \ref{fig:safetymujoco-envs-pic-mujoco-tasks} (a), the reward setting for the SafetyWalker-v4 task is illustrated in Equation (\ref{append-eq:SafetyWalker-reward-velocity}). In this equation, $X_v$ represents the forward velocity of the robot, and $\alpha_r$ signifies the reward weight. A higher reward is achieved as the robot's speed increases. The cost settings can be observed in Equation (\ref{append-eq:SafetyWalker-cost}), in which $\alpha_c$ denotes the cost weight and $a_i$ corresponds to the values of the $i^{th}$ action. A smaller cost value indicates that the robot conserves more energy.

\begin{align}
\label{append-eq:SafetyWalker-reward-velocity}
    R_{safetywalker} = \alpha_r X_v.
\end{align}

\begin{align}
\label{append-eq:SafetyWalker-cost}
    C_{safetywalker} = \alpha_c \sum_{j=0}^J a_i^2.
\end{align}

\paragraph{SafetyHumanoidStandup-v4.} 
 
As presented in Figure \ref{fig:safetymujoco-envs-pic-mujoco-tasks} (b), the reward settings for the SafetyHumanoidStandup task can be observed in Equation (\ref{append-eq:SafetyHumanoidStandup-reward}), where $Z_h$ denotes the height of the robot standing up. A higher value for the robot's height results in an increased reward value. Additionally, the cost settings are demonstrated in Equation (\ref{append-eq:SafetyHumanoidStandup-cost}), in which $a_i$ corresponds to the values of the $i^{th}$ action, and $f_{k,force}$ represents the force values of the $k^{th}$ mass.

\begin{align}
\label{append-eq:SafetyHumanoidStandup-reward}
    R_{safetyhumanoidstandup} = Z_h.
\end{align}

\begin{align}
\label{append-eq:SafetyHumanoidStandup-cost}
    C_{safetyhumanoidstandup} = 0.1 * (\sum_{j}^J a_i)^2 + 0.5e-6 * (\sum_{k=0}^K f_{k,force})^2.
\end{align}

\paragraph{SafetyReacher-v4.} 

As depicted in Figure \ref{fig:safetymujoco-envs-pic-mujoco-tasks} (c), the reward settings for the SafetyReacher task are presented in Equation (\ref{append-eq:SafetyReacher-reward}). Here, $d_{ft}$ represents the distance between the robot's fingers and the goal position, with a smaller distance yielding a higher reward. Moreover, the cost settings can be observed in Equation (\ref{append-eq:SafetyReacher-cost}), where, similarly, $a_i$ corresponds to the values of the $i^{th}$ action.

\begin{align}
\label{append-eq:SafetyReacher-reward}
    R_{safetyreacher} = -\|d_{ft}\|_2.
\end{align}

\begin{align}
\label{append-eq:SafetyReacher-cost}
    C_{safetyreacher} = \sum_{j=0}^J a_i^2.
\end{align}

\paragraph{SafetyHopper-v4.}

As depicted in Figure \ref{fig:safetymujoco-envs-pic-mujoco-tasks} (c), the reward settings for the SafetyHopper task can be observed in Equation (\ref{append-eq:SafetyHopper-reward}). These settings are similar to those of SafetyWalker, indicating that the reward value increases as the robot's speed accelerates. Furthermore, the cost settings are presented in Equation (\ref{append-eq:SafetyHopper-cost}). The first part of the cost settings resembles those of SafetyWalker, while the second part pertains to the robot's health. If the robot satisfies the health conditions, no punishment is incurred; however, if it fails to meet these conditions, a cost value of 1 is applied.

\begin{align}
\label{append-eq:SafetyHopper-reward}
    R_{safetyhopper} = \alpha_r X_v.
\end{align}

\begin{align}
\label{append-eq:SafetyHopper-cost}
    C_{safetyhopper} = \alpha_c \sum_{j=0}^J a_i^2 + C_{he}.
\end{align}

\begin{equation}
\label{append-eq:hopper-healthy}
C_{he}=\left\{
\begin{aligned}
0 & , & 3=R_{he-state} + R_z + R_{angle}, \\
1 & , & Others.
\end{aligned}
\right. 
\end{equation}

\begin{equation}
\label{append-eq:hopper-healthy-state}
R_{he-state}=\left\{
\begin{aligned}
1 & , & State_{min} \leq State_{real} \leq State_{max}, \\
0 & , & Others.
\end{aligned}
\right. 
\end{equation}

\begin{equation}
\label{append-eq:hopper-healthy-z-axis}
R_{z}=\left\{
\begin{aligned}
1 & , & Z_{min} \leq Z_{real} \leq Z_{max}, \\
0 & , & Others.
\end{aligned}
\right.
\end{equation}

\begin{equation}
\label{append-eq:hopper-healthy-angle}
R_{angle}=\left\{
\begin{aligned}
1 & , & Angle_{min} \leq Angle_{real} \leq Angle_{max}, \\
0 & , & Others.
\end{aligned}
\right.
\end{equation}

\paragraph{SafetyAnt-v4.} 

As illustrated in Figure \ref{fig:safetymujoco-envs-pic-mujoco-tasks} (e), the reward settings for the SafetyAnt task are presented in Equation (\ref{append-eq:SafetyAnt-reward}). These settings resemble those of SafetyWalker, signifying that a higher reward is achieved as the robot's speed increases. Additionally, the cost settings are similar to those of SafetyWalker. A smaller cost value indicates that the robot consumes less energy.

\begin{align}
\label{append-eq:SafetyAnt-reward}
    R_{safetyant} = X_v.
\end{align}

\begin{align}
\label{append-eq:SafetyAnt-cost}
    C_{safetyant} = \alpha_c \sum_{j=0}^J a_i^2.
\end{align}

\paragraph{SafetyHalfCheetah-v4.} 

As shown in Figure \ref{fig:safetymujoco-envs-pic-mujoco-tasks} (f), the reward settings for the SafetyHalfCheetah task are presented in Equation (\ref{append-eq:SafetyHalfCheetah-reward}). In this equation, $V_{target}$ denotes the robot's target velocity, and $V_{cheetah}$ represents the robot's current velocity. A higher reward value is achieved as the current speed approaches the target speed. The cost settings can be observed in Equation (\ref{append-eq:SafetyHalfCheetah-cost}), where $H_{cheetah}$ denotes the current height of the robot, and $H_{target}$ represents the constrained height of the robot. If the current speed exceeds the target speed by a greater margin, more cost will be incurred.

\begin{align}
\label{append-eq:SafetyHalfCheetah-reward}
    R_{safetyhalfcheetah} = -|V_{cheetah} - V_{target}|.
\end{align}

\begin{align}
\label{append-eq:SafetyHalfCheetah-cost}
    C_{safetyhalfcheetah} = |H_{cheetah} - H_{target}|.
\end{align}

\paragraph{SafetyPusher-v4.} 

As depicted in Figure \ref{fig:safetymujoco-envs-pic-mujoco-tasks} (g), the reward settings for the SafetyPusher task can be observed in Equation (\ref{append-eq:SafetyPusher-reward}). Here, $D_{object-goal}$ represents the distance between the object and the goal position, while $D_{robot-object}$ denotes the distance between the robot and the object. A higher reward is achieved when the values of these two distances are smaller. The cost settings for SafetyPusher are similar to those of the SafetyWalker task.

\begin{align}
\label{append-eq:SafetyPusher-reward}
    R_{safetypusher} = -\| \textbf{$D_{object-goal}$} \|_2 -\| \textbf{$D_{robot-object}$} \|_2.
\end{align}

\begin{align}
\label{append-eq:SafetyPusher-cost}
    C_{safetypusher} = \alpha_c \sum_{j=0}^J a_i^2.
\end{align}

\paragraph{SafetyHumanoid-v4.} 

As depicted in Figure \ref{fig:safetymujoco-envs-pic-mujoco-tasks} (g), the reward settings for the SafetyHumanoid task are similar to those of SafetyWalker. The first part of the cost settings also shares similarities with SafetyWalker. The second part of the cost settings is related to the robot's state of health, specifically concerning the robot's standing distance. If the standing distance satisfies certain conditions, no punishment is incurred; however, if it does not meet these conditions, a cost of 1 is emitted.

\begin{align}
\label{append-eq:SafetyHumanoid-reward}
    R_{safetyhumanoid} = \alpha_r X_v.
\end{align}

\begin{align}
\label{append-eq:SafetyHumanoid-cost}
    C_{safetyhumanoid} = \alpha_c \sum_{j=0}^J a_i^2 + C_{z}.
\end{align}

\begin{equation}
\label{append-eq:SafetyHumanoid-healthy-z-axis}
C_{z}=\left\{
\begin{aligned}
0 & , & Z_{min} \leq Z_{real} \leq Z_{max}, \\
1 & , & Others.
\end{aligned}
\right.
\end{equation}


\subsection{Implementation Details and Additional Experiments}

In the conducted experiments, the primary hyperparameters employed can be observed in Tables \ref{table:algorithm-hyparameter-experiments-omnisafe-benchmark} and \ref{table:algorithm-hyparameter-experiments-safety-mujoco-benchmark}. Additionally, the cost limit configurations are presented in Table \ref{table:safety-cost-limit}, while the slack settings are demonstrated in Table \ref{table:slack-value-pcrpo}.  To conduct the experiments, a server equipped with 40 CPU cores (Intel® Xeon(R) Gold 5218R CPU @ 2.10GHz × 80) and 1 GTX-970 GPU (NVIDIA GeForce GTX 970/PCIe/SSE2) is utilized. The operating system running on the server is Ubuntu 18.04.

As illustrated in Table \ref{table:algorithm-hyparameter-experiments-omnisafe-benchmark}, we conducted an experiment by modifying the learning rate ($lr$) values of all baselines to $0.001$, disabling the linear learning rate decay ($linear~lr~decay$) for all baselines, and maintaining other parameters as specified in Table \ref{table:algorithm-hyparameter-experiments-omnisafe-benchmark}. The results of this experiment are depicted in Figure \ref{fig:compared-omnisafe-Ant-hopper-swimmer-baselines-same-parameters}, where our algorithm's performance is significantly superior to that of the SOTA baselines. 
In particular, as illustrated in Figures \ref{fig:compared-omnisafe-Ant-hopper-swimmer-baselines-same-parameters} (a-d), we conducted experiments on the SafetyAntVelocity tasks with identical hyperparameters. To better showcase the results, Figures \ref{fig:compared-omnisafe-Ant-hopper-swimmer-baselines-same-parameters} (a) and (b) display the outcomes of PCRPO and PCPO, while Figures \ref{fig:compared-omnisafe-Ant-hopper-swimmer-baselines-same-parameters} (c) and (d) present the results of CUP and PPOLag. These findings reveal that CUP and PPOlag do not perform well under the same conditions. Figures \ref{fig:compared-omnisafe-Ant-hopper-swimmer-baselines-same-parameters} (e) and (f) depict the results of all algorithms on the SafetyHopperVelocity tasks. The outcomes demonstrate that our algorithm not only achieves superior reward performance and faster convergence compared to all SOTA baselines, but also ensures safety, which the SOTA baselines fail to guarantee. Similar observations can be made for the SafetySwimmerVelocity tasks, as shown in Figures \ref{fig:compared-omnisafe-Ant-hopper-swimmer-baselines-same-parameters} (g) and (h).

It is worth noting that the parameters of the other baselines were fine-tuned by \cite{ji2023omnisafe}, which may represent the best performance achievable by the SOTA baselines, and our method can outperform all the baselines in terms of reward, safety performance and convergence. By thoroughly evaluating the effectiveness and efficiency of our proposed method in comparison to the baseline algorithms, we aimed to provide valuable insights into their respective strengths and weaknesses, as well as identify potential areas for improvement and future research directions.

\begin{table}[!htbp]
 \renewcommand{\arraystretch}{1.2}
  \centering
  \begin{threeparttable}
    \begin{tabular}{c|ccccc}
    \toprule
    \diagbox{parameters}{values}{algorithms} & CUP & PCPO& PPOLag&PCRPO\\
    \midrule
        device & cpu  & cpu & cpu & cpu \\ 
        torch threads & 1  & 1 & 1 & 1   \\
        vector env nums & 1  & 1 & 1 & 1  \\    
        parallel & 1  & 1 & 1 & 1\\     
        total steps & 10 million  & 10 million & 10 million& 10 million \\ 
        steps per epoch & 20000  & 20000 & 20000 & 20000 \\
        update iters & 40  & 10 & 40 & 10 \\
        batch size & 64  & 128 & 64 & 128 \\
        target kl & 0.01  & 0.01 & 0.02 & 0.01 \\
        entropy coef & 0  & 0 & 0 & 0 \\
        reward normalize & False  & False & False & False \\
        cost normalize & False  & False & False & False \\
        obs normalize & True  & True & True & True \\
        use max grad norm & True  & True & True & True \\
        max grad norm & 40  & 40 & 40 & 40 \\
        use critic norm & True  & True & True & True \\
        critic norm coef & 0.001  & 0.001 & 0.001 & 0.001 \\
        gamma & 0.99  & 0.99 & 0.99 & 0.99 \\
        cost gamma & 0.99  & 0.99 & 0.99 & 0.99 \\
        lam & 0.95  & 0.95 & 0.95 & 0.95 \\
        lam c & 0.95  & 0.95 & 0.95 & 0.95 \\
        clip & 0.2  & \textbackslash & 0.2 & \textbackslash \\
        adv estimation method & gae  & gae & gae & gae \\
        standardized rew adv & True  & True & True & True \\
        standardized cost adv & True  & True & True & True \\
        penalty coef & 0  & 0 & 0 & 0 \\
        cg damping & \textbackslash  & 0.1 & \textbackslash & 0.1 \\
        cg iters & \textbackslash  & 15 & \textbackslash & 15 \\
        actor type & gaussian learning  & gaussian learning & gaussian learning & gaussian learning \\
        hidden sizes & [64, 64]  & [64, 64] & [64, 64] & [64, 64] \\
        activation & tanh  & tanh & tanh & tanh \\
        lr & 0.0003  & 0.001 & 0.0003 & 0.001 \\
        lagrangian multiplier init & 0.001  & \textbackslash & 0.001 & \textbackslash \\
        lambda lr & 0.035  & \textbackslash & 0.035 & \textbackslash \\
    \bottomrule
    \end{tabular}    
    \end{threeparttable}
    \vspace{6pt}
\caption{Key hyparameters used in Omnisafe. }
\label{table:algorithm-hyparameter-experiments-omnisafe-benchmark}
\end{table}

\begin{figure}[htbp!]
 \centering
 \subcaptionbox{}
 {
\includegraphics[width=0.22\linewidth]{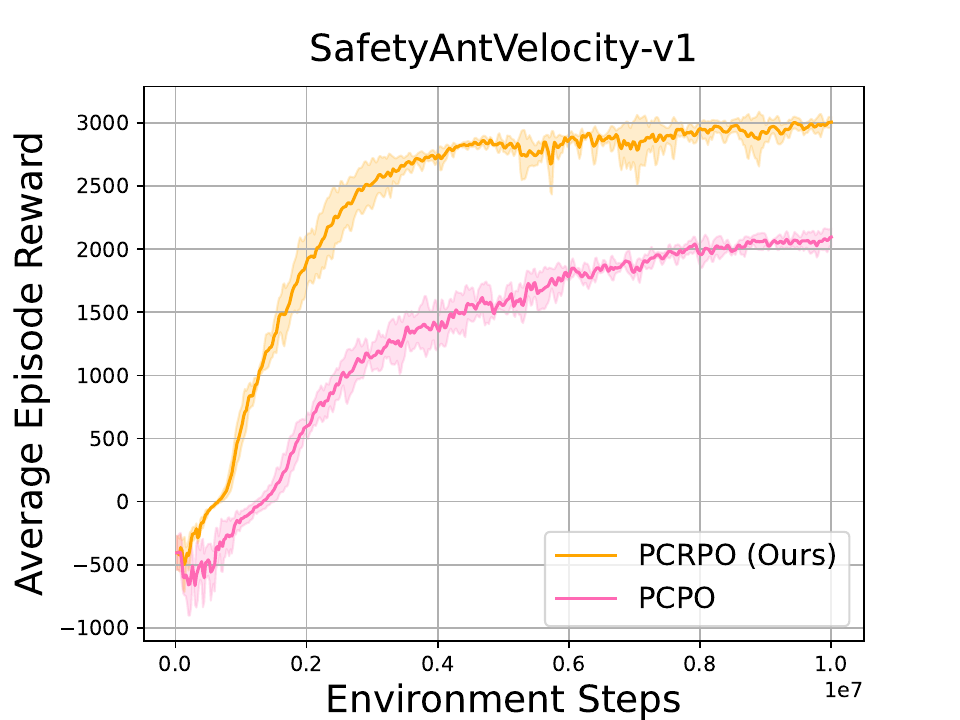}
}
 \subcaptionbox{}
 {
\includegraphics[width=0.22\linewidth]{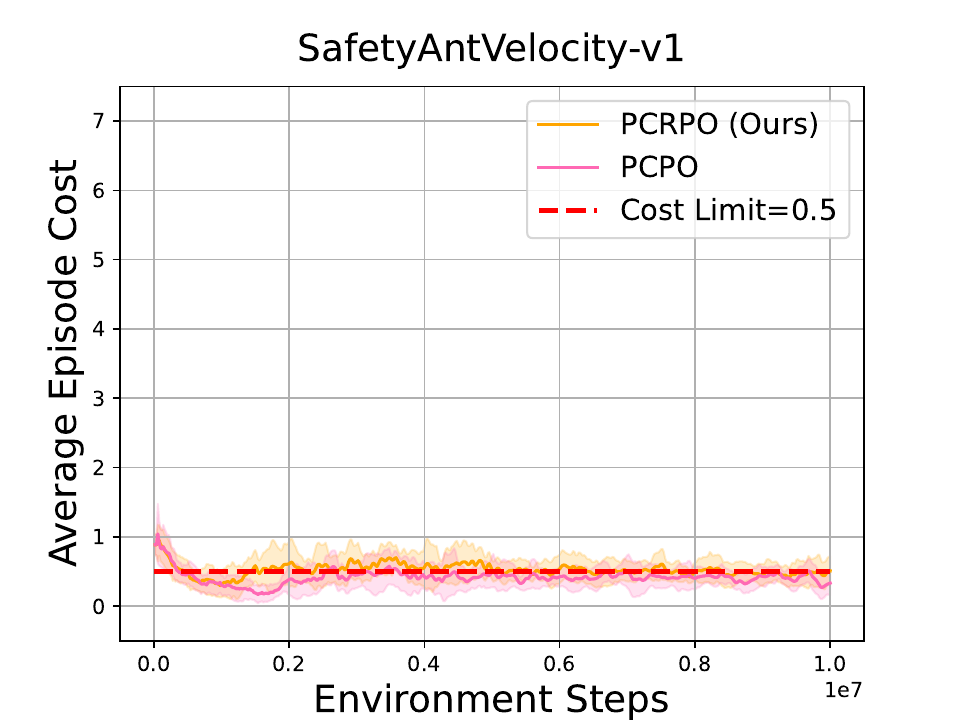}
}
\subcaptionbox{}
 {
\includegraphics[width=0.22\linewidth]{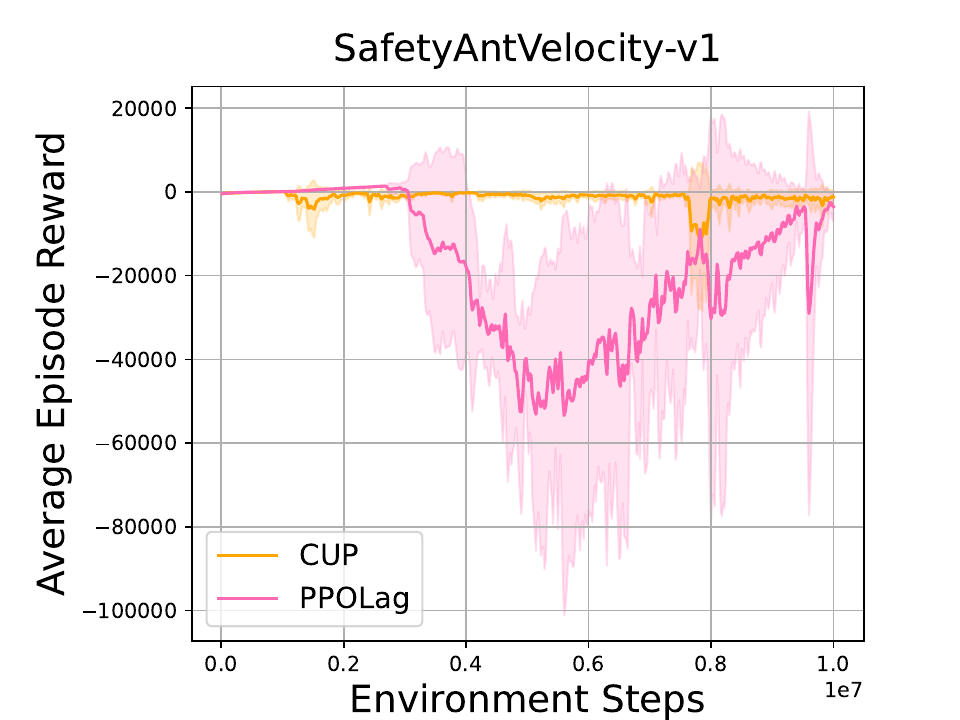}
}
 \subcaptionbox{}
 {
\includegraphics[width=0.22\linewidth]{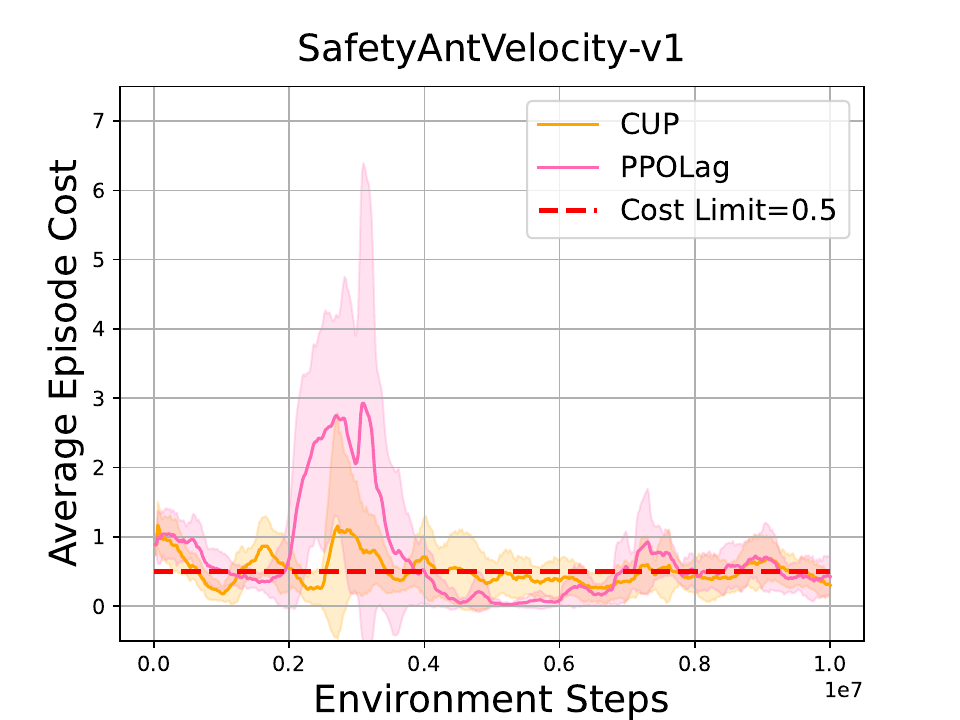}
}
 \subcaptionbox{}
 {
\includegraphics[width=0.22\linewidth]{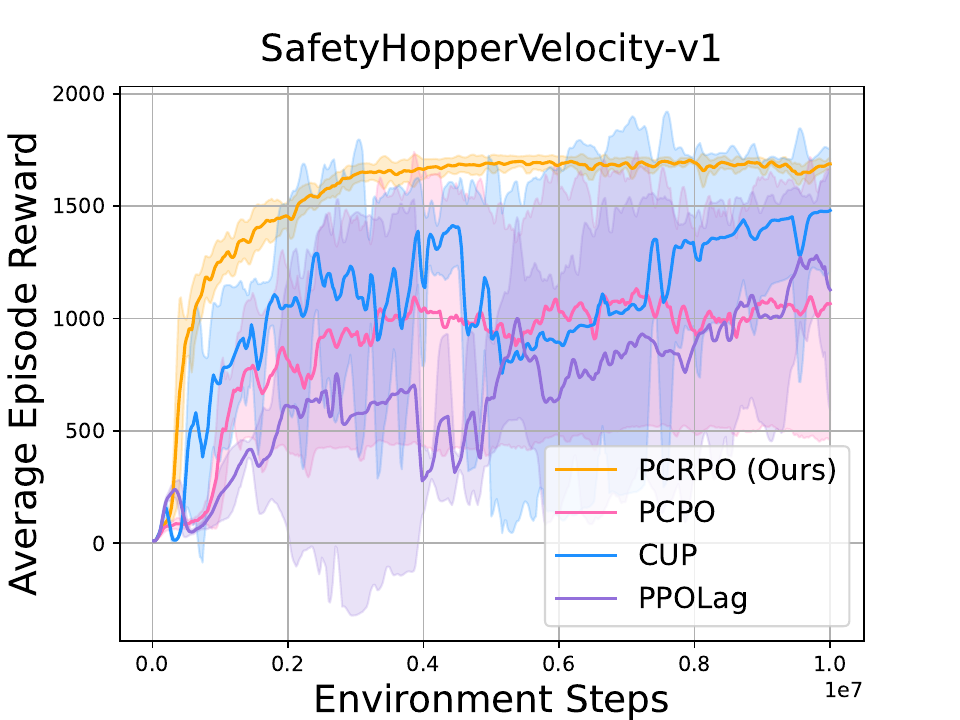}
}
 \subcaptionbox{}
 {
\includegraphics[width=0.22\linewidth]{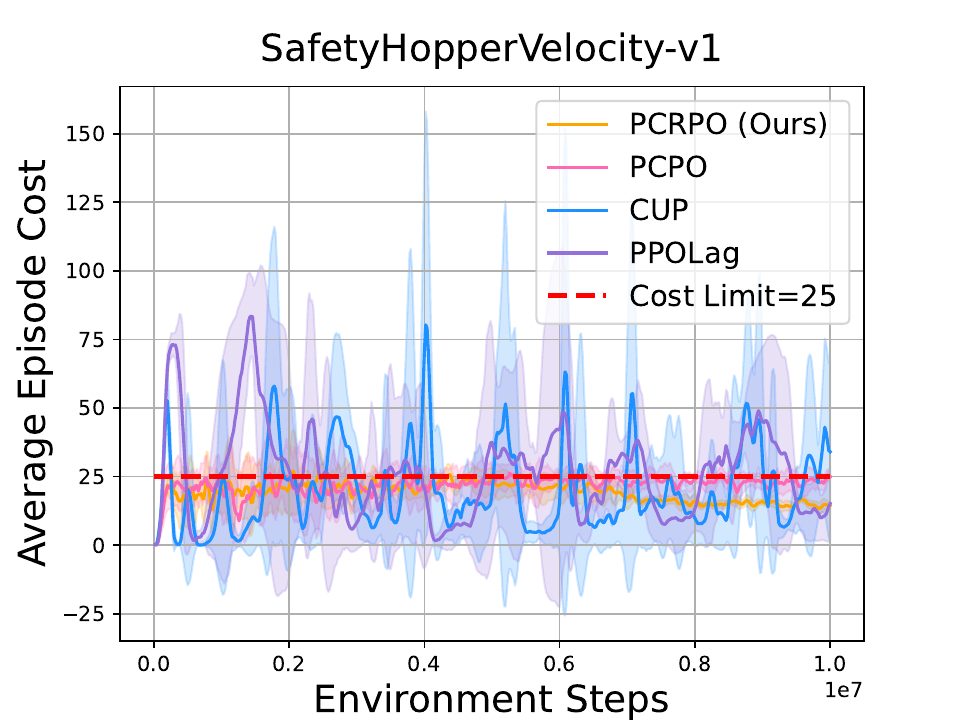}
}
 \subcaptionbox{}
 {
\includegraphics[width=0.22\linewidth]{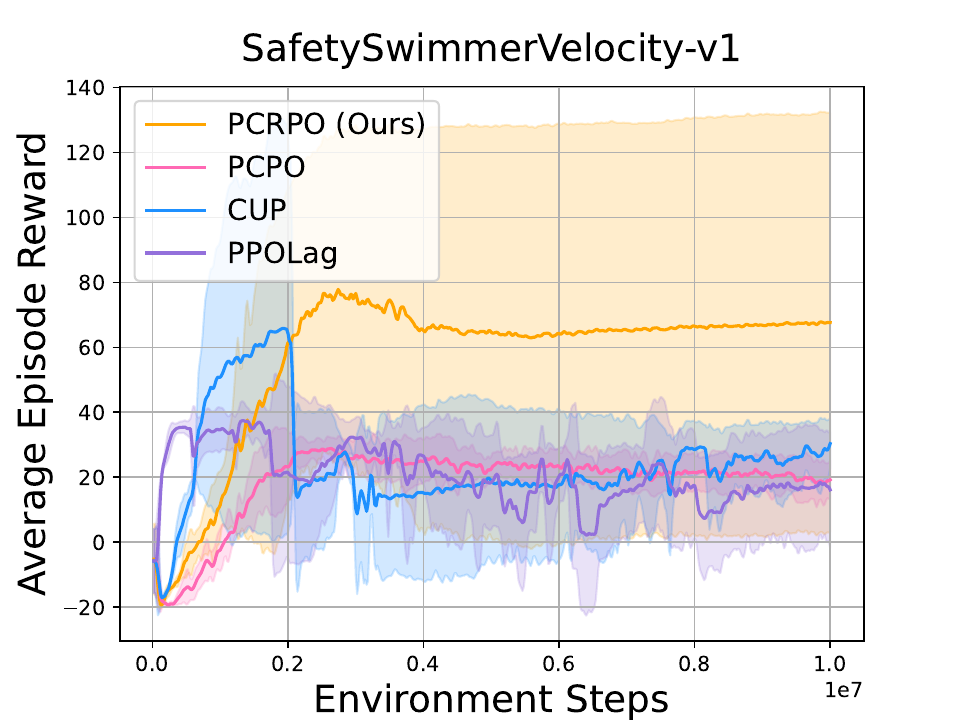}
}
 \subcaptionbox{}
 {
\includegraphics[width=0.22\linewidth]{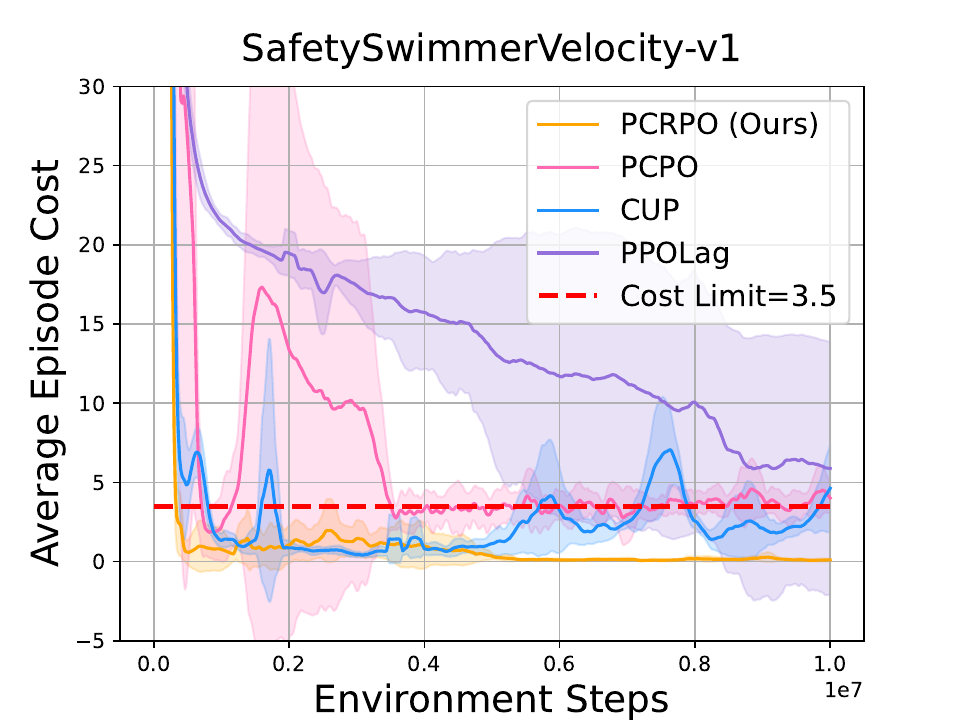}
}
\vspace{-5pt}
 	\caption{\normalsize 
  In the additional experiments, the performance of the proposed method was compared with several baseline algorithms, including PCPO \cite{yang2020projection}, CUP \cite{yang2022constrained}, and PPOLag \cite{ji2023omnisafe}. The comparison was conducted on three safety-constrained tasks: SafetyAntVelocity-v1, SafetyHopperVelocity-v1, and SafetyHopperVelocity-v1.
 	} 
  \label{fig:compared-omnisafe-Ant-hopper-swimmer-baselines-same-parameters}
\vspace{-15pt}
 \end{figure}

\begin{table}[!htbp]
 \renewcommand{\arraystretch}{1.2}
  \centering
  \begin{threeparttable}
    \begin{tabular}{cc|cc}
    \toprule
    Parameters & value & Parameters & value \\
    \midrule
    gamma & 0.995             &       tau & 0.97    \\                 l2-reg & 1e-3 & cost kl &  0.05   \\
           damping & 1e-1          &  batch-size & 16000  \\    
           epoch & 500          &  episode length & 1000  \\     
           grad-c & 0.5          & neural network  & MLP  \\ 
           hidden layer dim & 64          & accept ratio  & 0.1  \\
           energy weight & 1.0          & forward reward weight  & 1.0  \\
    \bottomrule
    \end{tabular}    
    \end{threeparttable}
    \vspace{6pt}
\caption{Key hyparameters used in Safety-MuJoCo Benchmark. }
\label{table:algorithm-hyparameter-experiments-safety-mujoco-benchmark}
\end{table}

\begin{table}[!htbp]
 \renewcommand{\arraystretch}{1.2}
  \centering
  \begin{threeparttable}
    \begin{tabular}{cc}
    \toprule
    Slack settings & KL values \\
    \midrule
    2SR & 0.01 for reward (Not violation), 0.05 for both reward and cost (Violations). \\                 2SC & 0.05 for all rewards.     \\
           3SC-F, 3SC-G, 3SR-F,  3SR-G& 0.01 for all Rewards.            \\ 
           4S-G-V0, 4S-G-V1, 4S-F-V0, 4S-F-V1 & 0.01 for all rewards.              \\           
    \bottomrule
    \end{tabular}    
    \end{threeparttable}
    \vspace{6pt}
\caption{KL settings in experiments. }
\label{table:kl-for-reward-cost}
\end{table}

\begin{table}[!htbp]
 \renewcommand{\arraystretch}{1.2}
  \centering
  \begin{threeparttable}
    \begin{tabular}{cc}
    \toprule
    Environment & Cost limit \\
    \midrule
    SafetyWalker-v4 & 40     \\                 SafetyHumanoidStandup-v4 & 1200    \\
           SafetyReacher-v4 & 40            \\ 
           SafetyAntVelocity-v1 & 0.5             \\ 
           SafetyHopperVelocity-v1 & 25            \\
           SafetySwimmerVelocity-v1 & 3.5    \\
           SafetySwimmerVelocity-v1 (ablation of cost limit ) & 0.08    \\SafetyWalker-v4 (ablation of slack settings ) &  40      \\
           SafetyHumanoidStandup-v4 (ablation of gradient manipulation ) &  1200   \\
    \bottomrule
    \end{tabular}    
    \end{threeparttable}
    \vspace{6pt}
\caption{Cost limit settings in experiments. }
\label{table:safety-cost-limit}
\end{table}

\begin{table}[!htbp]
 \renewcommand{\arraystretch}{1.2}
  \centering
  \begin{threeparttable}
    \begin{tabular}{cc}
    \toprule
    Environment & Slack value \\
    \midrule
    SafetyWalker-v4 & $h^+=5, h^-=-5$     \\                 SafetyHumanoidStandup-v4 & $h^+=300, h^-=-300$    \\
           SafetyReacher-v4 & $h^+ =0, h^- \rightarrow \infty$            \\ 
           SafetyAntVelocity-v1 & $h^+=0.25, h^-=-0.25$             \\ 
           SafetyHopperVelocity-v1 & $h^+=0, h^-=-9$             \\
           SafetySwimmerVelocity-v1 & $h^+ =0, h^- \rightarrow \infty$   \\
           SafetySwimmerVelocity-v1 (ablation of cost limit ) & $h^+ =0.04, h^- =-0.04$   \\SafetyWalker-v4 (ablation of slack settings ) &  $h^+ \rightarrow +\infty, h^- =0$ (PCRPO-2SR)      
           \\SafetyWalker-v4 (ablation of slack settings ) &  $h^+ =20, h^- =0$ (PCRPO-3SR-G)   
           \\SafetyWalker-v4 (ablation of slack settings ) &  $h^+ =20, h^- =-20$ (PCRPO-4S-F \& PCRPO-4S-G)      \\
           SafetyHumanoidStandup-v4 (ablation of gradient manipulation methods) &  $h^+=300, h^-=-300$ (PCRPO \& SCRPO)      \\
    \bottomrule
    \end{tabular}    
    \end{threeparttable}
    \vspace{6pt}
\caption{Slack settings in experiments. }
\label{table:slack-value-pcrpo}
\end{table}

\end{document}